\relax
\pdfoutput=1
\documentclass[letterpaper]{article} %
\usepackage{aaai22}  %
\usepackage[T1]{fontenc} \usepackage{ae} \usepackage{aecompl}
\usepackage{times}  %
\usepackage{helvet}  %
\usepackage{courier}  %
\usepackage{xurl}
\usepackage[hidelinks]{hyperref}  %
\hypersetup{
  breaklinks=true,   %
  pdfusetitle=true,  %
}
\usepackage{graphicx} %
\urlstyle{rm} %
\usepackage{natbib}  %
\usepackage{caption} %
\DeclareCaptionStyle{ruled}{labelfont=normalfont,labelsep=colon,strut=off} %
\frenchspacing  %
\setlength{\pdfpagewidth}{8.5in}  %
\setlength{\pdfpageheight}{11in}  %
\usepackage{algorithm}
\usepackage{algorithmic}

\usepackage{amsmath}    %
\usepackage{amssymb}    %
\usepackage{amsthm}     %
\usepackage{bm}         %
\usepackage{thmtools}
\usepackage{symbols}
\usepackage{symbols_extra}
\usepackage{influence-diagrams}
\usepackage{cancel}
\usepackage[capitalize,noabbrev]{cleveref}
\usepackage{multicol}
\usepackage{multirow}
\usepackage{hhline}

\usepackage{newfloat}
\usepackage{listings}
\lstset{%
	basicstyle={\footnotesize\ttfamily},%
	numbers=left,numberstyle=\footnotesize,xleftmargin=2em,%
	aboveskip=0pt,belowskip=0pt,%
	showstringspaces=false,tabsize=2,breaklines=true}
\floatstyle{ruled}
\newfloat{listing}{tb}{lst}{}
\floatname{listing}{Listing}
\pdfinfo{
/Title (Why Fair Labels Can Yield Unfair Predictions)
/Author (Carolyn Ashurst, Ryan Carey, Silvia Chiappa, Tom Everitt)
/TemplateVersion (2022.1)
}

\setcounter{secnumdepth}{2} %

\title{
Why Fair Labels Can Yield Unfair Predictions:\\ Graphical Conditions for Introduced Unfairness
}
\author{
    Carolyn Ashurst,\textsuperscript{\rm 1}
    Ryan Carey,\textsuperscript{\rm 2}
    Silvia Chiappa,\textsuperscript{3}
    Tom Everitt\textsuperscript{\rm 3}
}
\affiliations{
    \textsuperscript{\rm 1}Alan Turing Institute,
    \textsuperscript{\rm 2}University of Oxford,
    \textsuperscript{\rm 3}DeepMind \\
    cashurst@turing.ac.uk, ry.duff@gmail.com, csilvia@deepmind.com, tomeveritt@deepmind.com
}

\begin{document}

\maketitle

\begin{abstract}
In addition to reproducing  discriminatory relationships in the training data,
machine learning systems can also introduce or amplify discriminatory effects.  We  refer  to  this as \emph{introduced  unfairness},  and  investigate the conditions under which it may arise. To this end, we propose \emph{introduced total variation} as a measure of introduced unfairness, and establish graphical conditions under which it may be incentivised to occur. 
These criteria imply that adding the sensitive attribute as a feature removes the incentive for introduced variation under well-behaved loss functions. Additionally, taking a causal perspective, \emph{introduced  path-specific effects} shed light on the issue of when specific paths should be considered fair.
\end{abstract}

\section{Introduction}

It is often said that ``unfair data leads to unfair models'', %
because machine learning systems tend to learn biases present in the training data.
However, sometimes a model can produce unfair predictions even when the training labels are fair.
More generally, a model can amplify the unfairness present in training labels.

To quantify this effect, which we refer to as \emph{introduced unfairness}, we propose computing a suitable measure of disparity 
for the training labels and the model predictions, and then comparing the two values.
One such measure is the \emph{total variation} \citep{zhang2018fairness}, 
a generalisation of demographic disparity that describes the strength of the statistical relationship between 
a sensitive variable (such as gender) and an outcome (such as the score given to an applicant's resume).
If the total variation of the predictions is greater than that of the training labels
we say that there is \emph{introduced total variation} (\S \ref{sec:defining-introduced}).

Introduced total variation is distinct from existing measures of unfairness
like \emph{separation} and \emph{sufficiency}, which generalise \emph{equalised odds} and \emph{predictive parity} respectively (\S \ref{sec:sep_suf}).
For binary classifiers, separation prevents introduced total variation, while sufficiency prevents reduced total variation.
In contrast, absence of introduced total variation guarantees neither separation or sufficiency.

Introduced unfairness would seem to be avoidable, since it is never present for a perfectly accurate predictor.
This raises the questions: why and when does introduced unfairness occur, and how can it be removed?
To answer these questions, we use structural causal models and their associated graphs
to represent the relationships between the variables underlying the training data (\S \ref{sec:setup}).
We also build on influence diagrams, by including the predictor and the loss function of the machine learning system in the same graph. This
allows us to reuse results for predictors that are optimal given the available features.

Our key contributions are establishing conditions for introduced total variation, and insight into why it can occur (\S \ref{sec:incentives}).
We find that predictors can be unfair in spite of fair labels because a feature they depend on is statistically dependent with the sensitive attribute.
We also show that the class of loss function influences the conditions under which introduced total variation is incentivised (\S \ref{sec:padm}).
In particular, we consider \emph{P-admissible} loss functions \citep{miller1993loss} --- those for which it is optimal to output the expected label given the input features --- such as mean squared error and cross-entropy.
Predictors that are optimal with respect to a P-admissible loss function can introduce unfairness because they are unable to disentangle information that they have about the sensitive attribute from the information they have about the target label.
Indeed, making the sensitive attribute available as a feature is always enough to prevent introduced total variation being incentivised under P-admissible loss.
We discuss benefits and limitations of this approach to preventing introduced unfairness.

The notion of introduced unfairness can be applied to causal definitions of fairness as readily as statistical ones (\S \ref{sec:psie}).
In particular, \emph{path-specific effects}  \citep{pearl2001direct} 
can help with understanding and addressing complex unfairness scenarios that are relevant to many real-world applications \citep{kilbertus2017avoiding,chiappa2019path,nabi2018fair}. 
We define \emph{path-specific introduced effects} as the difference in some particular path-specific effect on labels and predictions.
Building on this measure, we present some new considerations for how to determine whether paths should be labelled fair. %

The prevalence of introduced total variation is analysed in simulation in Section \ref{sec:empirical}.
Finally, we review related work (\S \ref{related-work}) and discuss findings, limitations, and how these results can be applied (\S \ref{discussion}).

\section{Setup}
\label{sec:setup}

Our fairness analysis focuses on supervised learning algorithms used to make predictions about individuals, specifically regression and classification, and uses structural causal models (SCMs) to represent relationships among variables. Note that Section \ref{sec:psie} relies upon the causal nature of SCMs, whereas
the results of Sections \ref{sec:defining-introduced}, \ref{sec:sep_suf}, \ref{sec:incentives} could also be translated into Bayesian Networks \citep{pearl1986fusion,kollerl2009probabilistic}.

\begin{definition}[Structural causal model (SCM); \citealp{pearl2009causality,pearl2016causal}]%
\label{def:scm}
    A \emph{structural causal model} ${\cal M}$ is a tuple $\langle \exovars, \evars, \structfns, P(\exovars)\rangle$, where $\exovars$ is a set of exogenous (unobserved or latent) variables and $\evars$ is a set of endogenous (observed) variables. $\structfns$ is a set of deterministic functions $\structfns = \{f_{\evar}\}$, where $f_{\evar}$ determines the value of $\evar \in \evars$ based on endogenous variables $\Pa^{\evar} \subseteq \evars \setminus \{V\}$ and exogenous variables  $\exovars^V \subseteq \exovars,$ that is, 
    \ $V \leftarrow f_{\evar}(\Pav{V}, \exovars^V).$
    $P(\exovars)$ is a joint distribution over the exogenous variables, which is assumed to factorise.
\end{definition}

An SCM $\scim$ may be associated with a directed graph ${\cal G}$ which has a node for each variable $B$ %
and an edge $A \to B$ for every $A \in \Pa^B\cup \exovars^B$. Paths from $V_1$ to $V_2$ of arbitrary length are denoted $V_1 \dashrightarrow V_2$. We only consider SCMs for which the graph is acyclic. We refer to ${\cal G}$ as the \emph{associated graph}, %
and say that $\scim$ is \emph{compatible} with $\cid$. %
We often omit the exogenous variables from the graphs.

We define an \emph{SL SCM} to be an SCM containing endogenous variables $Y, \preds, U$,  representing the outcome variable, model prediction, and the loss of some SL model. Specifically, the loss function $f_U$ is real-valued and has two arguments $\target$ and $\preds$ (we consider the
mean squared error $f_U = -(Y - \smash{\hat{Y}})^2$ and zero-one loss $f_U = 0$ if $Y=\smash{\hat{Y}}$; $-1$ otherwise). 
The parents of $\preds$ represent the input features. It may be the case that inputs to $\preds$ are descendants of $\target$. The associated graph is called an \emph{SL graph}. An SL SCM (or graph) includes a \emph{sensitive variable} $A$
if it has an endogenous variable $A$, which represents a sensitive attribute such as\ sex, race, or age.
We assume the possible values for $A$ always include $a_0$ and $a_1$, representing a baseline group and a marginalised group, respectively, though the domain of $A$ may contain more values. 
For example, if $A$ represents racial category, $A$ may take $k\ge2$ values, with $a_0$, $a_1$ representing individuals categorised as white and black respectively.

Following the influence diagram literature \citep{howard1984influence}, we represent $\preds$ with a square node and $U$ with an octagonal node, since $\preds$ can be viewed as a decision optimising the function $f_U$. We also adopt the term \emph{utility variable} to refer to $U$.

An example of SL graph representing a hiring test prediction setting is given in  Figure \ref{fig:zero_one}. 
The training data consists of one input feature $D$, which represents the candidates degree, and a label $Y$, which represents whether the candidate passes or fails.
The graph also include a variable that is not accessible to the predictor, namely a sensitive attribute $A$, which represents gender.
This reflects a scenario in which the sensitive attribute is not available to the developer, or the developer has chosen not to include it as in input, for example due to legal reasons. In this example, all inputs to $\target$ are also inputs to $\preds$, but in general this may not be the case (see later examples).

For an SL SCM $\scim$, we can consider different predictors $\pi: \dom(\Pa^{\hat Y}) \to \dom(\hat Y)$ (where $\dom$ denotes the possible outcomes of a set of variables) by replacing the structural function $f_{\hat Y}$ with $\pi$, which results in a modified SCM $\scim_\pi$.
A predictor $\pi$ is \emph{optimal} if it maximizes the expected value of the utility variable $\mathbb{E}(f_U(\target, \preds))$ given the available features.

\section{Defining Introduced Unfairness}
\label{sec:defining-introduced}

We propose quantifying introduced unfairness with the following approach: (i) select an appropriate measure of unfairness applicable to both $\preds$ and $Y$, and (ii)
calculate the difference in unfairness between $\preds$ and $\target$.
A natural choice of unfairness measure is \textit{total variation}, a generalisation of demographic disparity, which measures the difference in average outcome between different values of the sensitive attribute.~\looseness=-1

\begin{definition}[Average total variation; \citealp{zhang2018fairness}]
\label{def:atv}
The \emph{average total variation} (ATV) on a real-valued variable $V$ is the difference in the expected value of $V$ between the baseline and marginalised group:%
$$\ATV(V) = \mathbb{E}( V \mid A=a_1) - \mathbb{E}( V \mid A=a_0).$$
\end{definition}

We define the new concept \emph{introduced total variation} as the difference in magnitude of ATV between $\pred$ and $Y$.

\begin{definition}[Introduced total variation]
\label{def:itv}
In an SL SCM with real-valued $Y$ and $\haty$, %
the \emph{introduced total variation} (ITV) is:
$$\ITV = |\ATV(\haty)| - |\ATV(Y)|.$$
When ITV is positive/zero/negative we will respectively say that there is \emph{introduced, reproduced}, or \emph{reduced} total variation.
\end{definition}

We illustrate ITV on a hiring test prediction example represented by the SL graph of Figure \ref{fig:zero_one}. %

\paragraph{Example: Hiring test prediction.}
A model predicts job applicants' outcomes on a hiring test using their degree $D$ --- either `maths' or `statistics'. 
Degree is in turn affected by the sensitive attribute gender ($A$).
The loss $U$ depends on the target label $Y \in \{0,1\}$ (representing fail/pass) and on its prediction $\preds$. 
Suppose that 80\% of male applicants have degrees in maths (20\% in statistics), while 20\% of female applicants have degrees in maths (80\% in statistics).
Performance on the test is such that $P(Y=1 \ |\ D=maths) = 51\%$, $P(Y=1 \ |\ D=stats) = 49\%$ (otherwise $Y=0$).
This gives $|\ATV(Y)| = 0.012$.
\begin{figure}[h!]
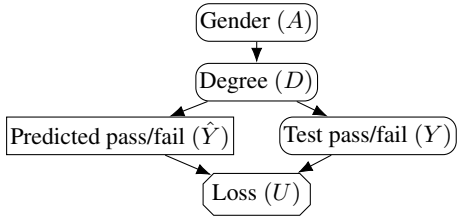

\centering
    \begin{influence-diagram}
       \setcompactsize[node distance=0.4cm]
       \setrectangularnodes
       \node (X2) {Degree $(D)$};
       \node (hatY) [below left = .2 and .3 of X2.south, decision] {Predicted pass/fail $(\hat{Y})$};
       \node (Y) [below right = .2 and .3 of X2.south] {Test pass/fail $(Y)$};
       \node (A) [above = .28 of X2] {Gender $(A)$};
       \node (U) [below = .95 of X2, utility] {Loss $(U)$};
       \edge {A} {X2.north};
       \edge {X2} {hatY};
       \edge {X2} {Y};
       \edge {hatY, Y} {U};
    \end{influence-diagram}
    \caption{SL graph representing hiring test prediction.}
    \label{fig:zero_one}
\end{figure}

\textit{Version 1:}
For an example with $ITV=0$, suppose mean squared error were used, and $\preds$ is a value in $[0,1]$ representing the probability of passing the test. The optimal predictor would be $f_{\preds}(maths)=0.51, f_{\preds}(stats)=0.49$, yielding $|\ATV(\preds)|=0.012$ and therefore $\ITV = 0$.

\textit{Version 2:}
Suppose instead that zero-one loss is used. Then the optimal predictor is $f_{\haty}(maths)=1, f_{\haty}(stats)=0$, yielding $|\ATV(\smash{\haty})|=0.6$, and therefore a high introduced total variation $\ITV=0.588$, due to the fact that 
while female applicants perform only slightly worse than male applicants with respect to $Y$, their predictions $\preds$ are vastly lower.

The existence of models with $\ITV>0$ (including in cases where $\ATV(Y)=0$, as in the later music test example) offers a new perspective on the adage that
\emph{unfair labels lead to unfair models}:
unfair labels may lead to an unfair model,
but sometimes, unfairness may originate exclusively, or predominantly
from other parts of the training process. %

\section{ITV, Separation, and Sufficiency}
\label{sec:sep_suf}

Another way to think about introduced disparities is to ask about reproduced and introduced \emph{dependencies}. These are captured by the existing notions of sufficiency and separation. Indeed ITV is related to the absence of \emph{separation}, a generalization of \emph{equalized odds} to non-binary variables.

\begin{definition}[Separation; \citealp{barocas-hardt-narayanan}]
The random variables $(\preds, \prot, \target)$ satisfy \emph{separation} if \ $\preds$ is independent of $\prot$ conditioned on $\target$, i.e.\
$ \pred \perp \prot \ |\ \target. $
\end{definition}

Absence of separation means that the model has added a \emph{dependence} 
between $A$ and $\preds$, that was not present between $A$ and $\target$. In contrast, ITV asks whether the model has added to the \emph{disparity} in $\preds$ (as measured by total variation), compared to that in $\target$. Thus, they both detect whether some effect has been introduced by the model. While lack of separation indicates that the model has introduced some new dependence, ITV captures whether this manifests as an increase (or decrease) in variation between groups. That is, ITV measures the group level impact resulting from the introduction of some new dependency by the model.

For example, in the test prediction examples (Figure \ref{fig:zero_one}), separation is not satisfied in either version. But in version 2 (where all statisticians are rejected), we see a large introduced variation. In version 1 (where statisticians are given slightly lower scores), we have $ITV=0$.

In the binary case, separation prevents introduced variation, as established next by Proposition \ref{lem:sep}.
The converse is not true: it is possible for a model to lack separation while there is a reduced or reproduced variation. For example, version 1 of Figure 1 lacks separation and has $\ITV=0$.

\begin{restatable}[]{proposition}{thmsepatv}
\label{lem:sep}
Let $\dom(Y) = \{0,1\}, \dom(\smash{\hat{Y}}) \subseteq [0,1]$, and $\dom(A)\supseteq \{a_0,a_1\}$, where $a_0, a_1$ are the baseline and marginalised groups.
Then separation implies $\ITV \le 0$, i.e. there is not introduced total variation.
\end{restatable}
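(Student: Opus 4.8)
The plan is to compute $\ATV(\hat Y)$ explicitly, show that it is a rescaling of $\ATV(Y)$, and then bound the rescaling factor by $1$ using the fact that $\hat Y$ is confined to $[0,1]$. The separation assumption is what makes the computation collapse into a clean product form.

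First I would exploit that $Y$ is binary, writing $p_i = P(Y=1 \mid A=a_i) = \mathbb{E}(Y \mid A=a_i)$ for $i \in \{0,1\}$, so that $\ATV(Y) = p_1 - p_0$. Next I would decompose each group-conditional mean of the prediction by the law of total expectation, conditioning on $Y$:
$$\mathbb{E}(\hat Y \mid A=a_i) = \sum_{y \in \{0,1\}} \mathbb{E}(\hat Y \mid Y=y, A=a_i)\, P(Y=y \mid A=a_i).$$
Here the separation hypothesis $\hat Y \perp A \mid Y$ does the essential work: it lets me replace $\mathbb{E}(\hat Y \mid Y=y, A=a_i)$ by the group-independent quantity $\mu_y := \mathbb{E}(\hat Y \mid Y=y)$. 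Substituting and using $P(Y=1 \mid A=a_i) = p_i$, the sum reduces to the affine form $\mathbb{E}(\hat Y \mid A=a_i) = \mu_0 + (\mu_1 - \mu_0)\,p_i$.

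Taking the difference across the two groups, the constant $\mu_0$ cancels and I obtain the factorisation
$$\ATV(\hat Y) = (\mu_1 - \mu_0)(p_1 - p_0) = (\mu_1 - \mu_0)\,\ATV(Y),$$
so that $|\ATV(\hat Y)| = |\mu_1 - \mu_0|\cdot|\ATV(Y)|$. The final step is to bound the scaling factor: since $\dom(\hat Y) \subseteq [0,1]$, both $\mu_0$ and $\mu_1$ are averages of values in $[0,1]$ and hence lie in $[0,1]$, giving $|\mu_1 - \mu_0| \le 1$. This yields $|\ATV(\hat Y)| \le |\ATV(Y)|$, i.e.\ $\ITV \le 0$, as required.

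I do not anticipate a serious obstacle: once separation is applied the argument is essentially a one-line factorisation. The two subtleties worth flagging are where each hypothesis is used. The clean affine form relies crucially on $Y$ being \emph{binary}, so that conditioning on $Y$ produces exactly two coefficients $\mu_0, \mu_1$ and $\mathbb{E}(\hat Y \mid A=a_i)$ is linear in $p_i$; for non-binary $Y$ the product structure driving the bound need not survive. Separately, the boundedness $\hat Y \in [0,1]$ is exactly what pins the scaling factor below $1$, and without it the inequality could fail. I would therefore make sure both of these usages are called out explicitly in the write-up.
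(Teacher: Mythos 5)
Your proof is correct and follows essentially the same route as the paper's: both decompose $\mathbb{E}(\hat Y \mid A=a_i)$ by total expectation over the binary $Y$, invoke separation to make the inner conditional expectations group-independent, factor out $\ATV(Y)$, and bound the remaining factor by $1$ using $\hat Y \in [0,1]$. Your affine reformulation $\mathbb{E}(\hat Y \mid A=a_i) = \mu_0 + (\mu_1-\mu_0)p_i$ is just a cleaner packaging of the paper's term-collection step, not a different argument.
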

\begin{proof}
$ | \ATV(\haty) |  = |  \mathbb{E}(\haty | a_1 ) -  \mathbb{E}(\haty | a_0 )  |$
\begingroup
\allowdisplaybreaks
\begin{align*}
      & =  |   \textstyle \sum_y P(y|a_1)\mathbb{E}[\haty | a_1, y] - \sum_y P(y|a_0)\mathbb{E}[\haty | a_0, y]  |  \\
      & = | \textstyle \sum_y P(y|a_1)\mathbb{E}[\haty | a_1, y] - \sum_y P(y|a_0)\mathbb{E}[\haty | a_1, y] | \\
      & \hspace{170pt} (\text{by separation})\\
      & = | \big( P(Y=1|a_1) - P(Y=1|a_0) \big) \mathbb{E}[\haty | a_1, Y=1]  \\
      &   \quad - \big( P(Y=1|a_1) - P(Y=1|a_0)) \big) \mathbb{E}[\haty | a_1, Y=0]  | \\
      & \hspace{180pt} (\text{$Y$ is binary})  \\  
      & = | \big( P(Y=1|a_1) - P(Y=1|a_0) \big) &\\
      & \quad . \big( \mathbb{E}[\haty | a_1, Y=1] - \mathbb{E}[\haty | a_1, Y=0] \big) | \hspace{35pt}  (\text{factor})  \\
      & \le | P(Y=1 \mid a_1) - P(Y=1 \mid a_0)| \hspace{18pt} (\text{as }  0 \le \haty \le 1) \\
      & =| \ATV(Y)| .     \qedhere  
                   \end{align*}
                   \endgroup
\end{proof}

We also establish the relationship between \textit{sufficiency} and ITV. Sufficiency generalises the notion of \textit{predictive parity}, and is closely related to the notion of \textit{calibration by group} \citep{barocas-hardt-narayanan}. Sufficiency means that the predictor $\preds$ fully captures the dependencies between $\prot$ and $\target$ (but does not prohibit additional dependencies being introduced by the model).

\begin{definition}[Sufficiency; \citealp{barocas-hardt-narayanan}]
The random variables $(\preds, \prot, \target)$ satisfy \emph{sufficiency} if \ $\target$ is independent of $\prot$ conditioned on $\preds$, i.e.\
$ \target \perp \prot \ |\ \pred. $
\end{definition}

If sufficiency holds, a model may still introduce additional variation. In fact, sufficiency prevents \emph{reduced} variation in the binary case:

\begin{restatable}[]{proposition}{thmsufatv}
\label{lem:suf}
Let $\dom(\preds) = \{0,1\}, \dom(Y) \subseteq [0,1]$, and $\dom(A)\supseteq \{a_0,a_1\}$, where $a_0, a_1$ are the baseline and marginalised groups.
Then sufficiency implies $\ITV \ge 0$, i.e.\ there is not reduced total variation.
\end{restatable}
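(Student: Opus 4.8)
The plan is to mirror the proof of Proposition \ref{lem:sep}, exchanging the roles of $Y$ and $\haty$. Whereas in the separation case we expanded $\ATV(\haty)$ by conditioning on the binary variable $Y$ and then bounded it by $\ATV(Y)$, here I would use sufficiency $\target \perp \prot \mid \preds$ together with the binarity of $\preds$ to expand $\ATV(Y)$ by conditioning on $\preds$ and bound it by $\ATV(\haty)$, yielding the reversed inequality $|\ATV(Y)| \le |\ATV(\haty)|$, i.e.\ $\ITV \ge 0$.

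Concretely, first I would apply the law of total expectation over the predictor to write $\mathbb{E}(Y \mid a_i) = \sum_{\haty} P(\haty \mid a_i)\,\mathbb{E}[Y \mid a_i, \haty]$ for $i \in \{0,1\}$. Next I would invoke sufficiency, which gives $\mathbb{E}[Y \mid a_i, \haty] = \mathbb{E}[Y \mid \haty]$, independent of $a_i$; this lets me replace the $a_0$-conditioned inner expectations with their $a_1$-conditioned counterparts without changing their value, so that $|\ATV(Y)|$ becomes $\bigl|\sum_{\haty} \bigl(P(\haty \mid a_1) - P(\haty \mid a_0)\bigr)\,\mathbb{E}[Y \mid a_1, \haty]\bigr|$.

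Then, because $\preds$ is binary, the sum over $\haty$ has only two terms, and since $\bigl(P(\haty{=}1\mid a_1) - P(\haty{=}1\mid a_0)\bigr) = -\bigl(P(\haty{=}0\mid a_1) - P(\haty{=}0\mid a_0)\bigr)$, I can factor to obtain $\bigl|P(\haty{=}1\mid a_1) - P(\haty{=}1\mid a_0)\bigr| \cdot \bigl|\mathbb{E}[Y\mid a_1,\haty{=}1] - \mathbb{E}[Y\mid a_1,\haty{=}0]\bigr|$. The first factor equals $|\ATV(\haty)|$ precisely because $\haty$ is binary, so $\mathbb{E}[\haty \mid a_i] = P(\haty{=}1 \mid a_i)$. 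Finally, since $\dom(Y) \subseteq [0,1]$ both conditional expectations of $Y$ lie in $[0,1]$, so their difference has magnitude at most $1$; bounding the second factor by $1$ completes the argument.

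The only subtlety — and the point where the two support hypotheses earn their keep — is that the roles of the assumptions are swapped relative to Proposition \ref{lem:sep}: here the two-term factorisation requires $\preds$ binary (rather than $Y$), while the final magnitude bound requires $Y \in [0,1]$ (rather than $\haty$). I expect no genuine obstacle beyond this bookkeeping, since the argument is structurally identical to the separation case with the variables interchanged.
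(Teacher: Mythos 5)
Your proposal is correct and is exactly the paper's proof: the paper disposes of Proposition \ref{lem:suf} with the single line ``Swap $Y$ and $\hat{Y}$ in the proof of Proposition \ref{lem:sep},'' and your write-up is precisely that swap carried out in detail, including the correct observation that the binarity assumption now falls on $\preds$ (for the two-term factorisation) and the $[0,1]$ bound on $Y$ (for the final inequality).
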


\begin{proof}
Swap $Y$ and $\hat{Y}$ in the proof of Proposition \ref{lem:sep}.
\end{proof}

In Appendix \ref{sec:app_info_theory}, we consider a related measure, \emph{introduced mutual information}, which can be applied to cases where $\preds$ and $\target$ are categorical or continuous. Analogous results to Propositions \ref{lem:sep} and \ref{lem:suf} hold in this more general setting.
A corollary of these results is that it is often impossible to even approximately satisfy sufficiency and independence requirements simultaneously.

\section{Incentives for ITV}
\label{sec:incentives}

Under what circumstances will introduced variation arise? 
Since an arbitrary predictor can introduce variation in almost any setting, we focus on predictors that have been trained to optimality in their given setup. In other words, we ask when introduced variation is \emph{incentivised}.

To specify the graphical criteria, we use the well-known concept of d-separation, which identifies conditional independencies based on the paths between variables.

\begin{definition}[d-separation; \citealp{Verma1988soundness}]%
\label{def:d-separation}
A \emph{path} $V_1 \upathto V_k$ is a sequence of distinct nodes $V_1,...,V_k, k\geq 0$ such that 
every pair of consecutive nodes is connected by an edge
$V_i \to V_{i+1}$ or $V_i \leftarrow V_{i+1}$.
When three consecutive nodes in a path have converging edges $V_{i-1} \to V_i \gets V_{i+1}$, we call $V_i$ a \emph{collider}.
    A path $p$ is said to be \emph{blocked} by the conditioning set $\sZ \subseteq \evars$ if $p$ contains a 
    non-collider $W$ in $\sZ$
    or a collider $W$ that is neither equal to, nor an ancestor of, any $Z \in \sZ$.
For disjoint sets $\sX$, $\sY$, $\sZ$, the conditioning set $\sZ$ is said to \emph{d-separate} $\sX$ from $\sY$,
if and only if $\sZ$ blocks every path
from a node in $\sX$ to a node in $\sY$. Sets that are not d-separated are
called \emph{d-connected}.~\looseness=-1
\end{definition}

If $X$ and $Y$ are d-separated by $Z$, then $X$ and $Y$ are conditionally independent given $Z$ in any SCM compatible with the graph, i.e.\ $P(X\mid Y, Z) = P(X\mid Z)$ \citep{Verma1988soundness}.
A consequence of d-separation of particular value to us, is that it can be used to establish which features can be useful to an optimal predictor.

\begin{definition}[Requisite feature; \citealp{Lauritzen2001}]
In an SL graph, a feature $W \in \Payhat$ 
is \emph{requisite} if it is d-connected to $U$
conditional on $\Payhat \cup \{\haty\} \setminus \{ W \}$.
Let 
$req(\Payhat)$
denote the set of requisite features.
\end{definition}
\begin{lemma}[\citealp{Fagiuoli1998,Shachter2016}]
\label{le:requisite}
Every SL SCM has an optimal predictor $\pi$ that only depends on requisite features, i.e.\ $P(\hat Y \mid \Payhat) =
P(\hat Y \mid req(\Payhat))$.
\end{lemma}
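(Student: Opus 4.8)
The plan is to construct an explicit optimal predictor and verify that it reads only the requisite features. First I would write the expected utility of a policy $\pi$ by conditioning on the information state, i.e.\ on a joint value $\mathbf{pa}$ of the parents $\Payhat$. Since in an SL graph the only child of $\haty$ is $U$, setting $\haty=\hat y$ does not perturb the distribution of $\target$, so the value of choosing $\hat y$ in state $\mathbf{pa}$ is
$$g(\hat y,\mathbf{pa}) \;=\; \mathbb{E}\big[f_U(\target,\hat y)\,\big|\, \Payhat=\mathbf{pa}\big] \;=\; \textstyle\sum_{y} P(y\mid \mathbf{pa})\, f_U(y,\hat y),$$
and $\mathbb{E}_{\scim_\pi}(U)=\sum_{\mathbf{pa}} P(\mathbf{pa})\, g(\pi(\mathbf{pa}),\mathbf{pa})$. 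Hence any policy selecting, for each $\mathbf{pa}$, a maximiser $\pi^\star(\mathbf{pa})\in\arg\max_{\hat y} g(\hat y,\mathbf{pa})$ is optimal.

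The crux is to show that $g(\hat y,\cdot)$ depends on $\mathbf{pa}$ only through its requisite coordinates. Fix a non-requisite feature $W$. By definition $W$ is d-separated from $U$ given $\Payhat\cup\{\haty\}\setminus\{W\}$, and by soundness of d-separation this yields $U \perp W \mid \Payhat\setminus\{W\},\haty$. Because the only parents of $U$ are $\target$ and $\haty$, conditioning on $\haty$ merely blocks the spurious information path $W\to\haty\to U$ that is present for \emph{every} information parent; the surviving content of the independence is exactly that $\sum_y P(y\mid\mathbf{pa})f_U(y,\hat y)$ is unchanged when the $W$-coordinate of $\mathbf{pa}$ is varied and the remaining coordinates are held fixed. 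Thus $g(\hat y,\mathbf{pa})$ is constant along the $W$-direction.

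I would then eliminate all non-requisite features at once. Since $g(\hat y,\cdot)$ is constant along each individual non-requisite direction (with every other coordinate, requisite or not, held fixed), changing the non-requisite coordinates one at a time shows $g(\hat y,\mathbf{pa})=g(\hat y, req(\mathbf{pa}))$ depends only on the requisite features. Consequently I can pick a single maximiser for each value of $req(\Payhat)$ and define $\pi^\star$ as a function of $req(\Payhat)$ alone; it is optimal by the first step and, being deterministic in the requisite features, satisfies $P(\haty\mid\Payhat)=P(\haty\mid req(\Payhat))$, as required.

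The main obstacle I anticipate is the bookkeeping around the decision node when translating the graphical criterion into the probabilistic one: one must justify that conditioning on $\haty$ in the requisiteness definition exactly neutralises the $W\to\haty\to U$ edge-path rather than opening a collider, and confirm that $\haty$ is not an ancestor of $\target$ so that $g$ is a genuine conditional expectation rather than an interventional quantity. A secondary, routine point is the ``constant in each coordinate $\Rightarrow$ constant jointly'' step, which is immediate for the finite-support models considered here but would require an almost-sure qualifier in general.
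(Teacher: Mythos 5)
The paper offers no proof of this lemma at all: it is imported wholesale from \citet{Fagiuoli1998} and \citet{Shachter2016} (with the requisite-feature notion taken from \citealp{Lauritzen2001}), so there is no internal argument to compare yours against. What you have written is essentially the standard proof behind those citations, and its skeleton is correct: decompose $\mathbb{E}_{\scim_\pi}(U)=\sum_{\mathbf{pa}}P(\mathbf{pa})\,g(\pi(\mathbf{pa}),\mathbf{pa})$ over information states and take pointwise maximisers. Note that this decomposition is legitimate only because $Y$ is not a descendant of $\haty$, which in turn rests on your premise that $U$ is the only child of $\haty$ (and $U$ is a sink); the paper's definition of an SL SCM leaves this implicit, so you should state it as an explicit assumption rather than as a fact about SL graphs.

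The one genuine soft spot is the crux step, which you flag yourself but then gloss rather than derive: passing from $U \perp W \mid (\Payhat \setminus \{W\}),\haty$ to ``$g(\hat y,\cdot)$ is constant in the $W$-coordinate'' is not a purely formal consequence. That independence is a statement about the joint distribution induced by whatever policy is in force, and under a policy that reads $W$, conditioning on $\haty$ together with the remaining parents does carry information about $W$, so the independence does not directly speak about $\sum_y P(y\mid\mathbf{pa})f_U(y,\hat y)$. The clean repair is to instantiate d-separation soundness in the SCM $\scim_{\pi_{\hat y}}$ given by the \emph{constant} policy $\pi_{\hat y}\equiv\hat y$, which is compatible with the graph: there $\haty$ is almost surely constant, so conditioning on it is vacuous and $U=f_U(Y,\hat y)$, whence $\mathbb{E}\big(f_U(Y,\hat y)\mid \Payhat\big)=\mathbb{E}\big(f_U(Y,\hat y)\mid \Payhat\setminus\{W\}\big)$ almost surely; since $P(y\mid\mathbf{pa})$ is the same under every policy, this is exactly the constancy of $g(\hat y,\cdot)$ you need, for each $\hat y$, and it sidesteps the collider-opening and ancestrality worries entirely. (Alternatively, one can show graphically that non-requisiteness of $W$ implies $W$ is d-separated from $Y$ given $\Payhat\setminus\{W\}$ alone, since every $W$--$Y$ path through $\haty$ or $U$ is blocked at a collider outside that set.) With that step made precise, your coordinate-by-coordinate elimination and the final reading $P(\haty\mid\Payhat)=P(\haty\mid req(\Payhat))$ for a deterministic policy measurable in the requisite features go through, modulo the almost-sure caveat on null feature configurations that you already note and the existence of maximisers, which holds in the paper's settings.
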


\subsection{Arbitrary loss functions}

We begin with a graphical criterion for when ITV may be incentivised under arbitrary loss functions.

\begin{restatable}[Introduced total variation criterion]{theorem}{theoremitvgc}
\label{theorem:itv_gc}
An SL graph $\mathcal{G}$ is compatible with an SCM $\scim$ in which all optimal predictors have $\ITV>0$ iff there is a requisite feature $W \in \Payhat$ that is d-connected to $A$.
\end{restatable}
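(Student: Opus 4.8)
The plan is to prove the two directions separately: the forward implication by contraposition, and the backward implication by constructing an explicit worst-case SCM. Throughout I read ``d-connected to $A$'' as marginal d-connection (empty conditioning set).

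\textit{Forward direction (contraposition).} Suppose no requisite feature is d-connected to $A$, i.e.\ every $W \in req(\Payhat)$ is d-separated from $A$ by the empty set. Since the conditioning set is empty, d-separation of $\{A\}$ from the whole set $req(\Payhat)$ is exactly the conjunction of the pairwise d-separations (a path from $A$ into $req(\Payhat)$ is a path from $A$ to some member, and all such paths are blocked iff each member is blocked), so $req(\Payhat) \perp A$ in every SCM compatible with $\mathcal{G}$. Now fix any such $\scim$. By \Cref{le:requisite} it has an optimal predictor $\pi$ with $P(\haty \mid \Payhat) = P(\haty \mid req(\Payhat))$, so $\haty$ depends on the features only through $req(\Payhat)$; combined with $req(\Payhat)\perp A$ this gives $\haty \perp A$ and hence $\ATV(\haty)=0$. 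Therefore $\ITV = |\ATV(\haty)| - |\ATV(Y)| = -|\ATV(Y)| \le 0$, so it is false that all optimal predictors have $\ITV>0$. This establishes the contrapositive.

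\textit{Backward direction (construction).} Assume some requisite $W \in \Payhat$ is d-connected to $A$. I will build a single $\scim$ compatible with $\mathcal{G}$, with $Y,W,\haty$ binary and $f_U$ the zero-one loss, in which every optimal predictor has $\ITV>0$. Marginal d-connection gives a collider-free path $W \dashrightarrow A$; propagating a binary signal along it makes $W \not\perp A$, so the functions can be chosen with $\delta := P(W=w_1 \mid a_1) - P(W=w_1 \mid a_0) \ne 0$. Requisiteness gives an active path from $W$ to $U$ given $\Payhat \cup \{\haty\}\setminus\{W\}$; as the only parents of $U$ are $\haty$ (which lies in the conditioning set, so any path entering $U$ through it is blocked) and $Y$, this path reaches $U$ through $Y$, so $W$ is d-connected to $Y$ given the remaining features. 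Using the standard faithfulness-style instantiation along this active path (activating any colliders through their conditioned descendants), I choose the structural functions so that $P(Y=1 \mid \Payhat)$ equals $\tfrac12 + \epsilon$ when $W=w_1$ and $\tfrac12-\epsilon$ when $W=w_0$, for a small fixed $\epsilon \in (0,\tfrac12)$, independently of the remaining features and with no exact ties.

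\textit{Backward direction (optimality and amplification).} Under zero-one loss with binary $Y$, maximising $\mathbb{E}(f_U)$ is maximising accuracy, so every optimal predictor must output the MAP label $\arg\max_y P(Y=y\mid \Payhat)$ on every feature value of positive probability; with no ties this forces $\haty = \mathbf{1}[W=w_1]$ almost surely, so all optimal predictors share the same ATV. A direct computation then gives $\ATV(\haty)=\delta$ and $\ATV(Y)=2\epsilon\,\delta$, whence $\ITV = |\delta| - 2\epsilon|\delta| = (1-2\epsilon)|\delta| > 0$. Hence all optimal predictors of this $\scim$ have $\ITV>0$, as required. The delicate step is this backward construction for a general graph: I must instantiate the functions along the (possibly collider-containing) active $W \dashrightarrow Y$ path so that $W$ stays strictly informative about $Y$ given the other features, while arranging that those other features carry no additional information about $Y$ --- otherwise the MAP predictor could depend on them and potentially cancel the amplification. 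Checking that conditioning on colliders' descendants really produces the required $W$--$Y$ dependence, and that off-path variables can be made uninformative without violating compatibility with $\mathcal{G}$ (compatibility only demands that functional dependences be a subset of the edges), is where the work lies; the forward direction is routine once the empty-conditioning remark and \Cref{le:requisite} are in place.
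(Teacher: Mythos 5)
Your forward (soundness) direction is correct and is essentially the paper's own argument: \cref{le:requisite} supplies an optimal predictor that depends only on $req(\Payhat)$, and marginal d-separation of $A$ from every requisite feature then forces $\ATV(\haty)=0$, hence $\ITV\le 0$ for that predictor. The genuine gap is in the completeness direction, and it is not merely an omitted calculation: the construction you sketch \emph{cannot} work whenever the only $W$--$Y$ connection that requisiteness provides runs through a collider. Your plan is to choose structural functions so that $P(Y=1\mid\Payhat)$ equals $\tfrac12+\epsilon$ or $\tfrac12-\epsilon$ according to the value of $W$ alone, ``independently of the remaining features''. Marginalising that over the other features gives $P(Y=1\mid W=w_1)-P(Y=1\mid W=w_0)=2\epsilon\neq 0$, i.e.\ a \emph{marginal} dependence between $W$ and $Y$. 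Now take the graph $A\to W\to C\leftarrow S\to Y$ with $\Payhat=\{W,O\}$ where $C\to O$: it satisfies the theorem's criterion ($W$ is d-connected to $A$, and $W$ is requisite because conditioning on $O$ opens $W\to C\leftarrow S\to Y\to U$), yet every path from $W$ to $Y$ contains the collider $C$, so $W$ and $Y$ are d-separated by the empty set and therefore independent in \emph{every} SCM compatible with the graph. Your target conditional distribution is thus unachievable by any choice of functions; the difficulty you flag at the end (``checking that conditioning on colliders' descendants really produces the required $W$--$Y$ dependence'') is not a technical loose end but exactly where the approach breaks.

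The paper's completeness proof (Appendix~\ref{sec:app_itv_proofs}) uses a fundamentally different mechanism in this case. In its Case~2a, variables take values in $\{-1,0,1\}$, each collider computes a product $C^i=S^{i-1}\cdot S^i$ with $O^i=C^i$ observed, and crucially $f_W=A\cdot S^0$: the optimal predictor must multiply $W$ together with \emph{all} the collider features to decode $Y$, and it can do so only for the group with $A=1$; for $A=0$ the features are jointly uninformative and the predictor defaults to the prior mode. The introduced variation then comes from the group-dependent \emph{informativeness} of the features (giving $|\ATV(\pred)|=0.8$ while $\ATV(Y)=0$), not from any uniform dependence of $Y$ on $W$ --- which, as argued above, cannot exist in that graph. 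Separately, your construction tacitly assumes the $A$--$W$ trek and the $W$--$Y$ path share no node other than $W$; the paper needs dedicated arguments (its Cases~2b.i, 2b.ii and~3) when they intersect, e.g.\ when $A$ itself lies on the $W$--$Y$ path or when the trek's source coincides with $W$. In summary: your soundness half stands, and your construction is sound when a directed (or otherwise marginally active) $W$--$Y$ path exists, but the collider and path-intersection cases require the paper's product-based constructions rather than a faithfulness-style instantiation.
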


\begin{proof} %

We first show that the criterion is sound.
Suppose that $A$ is not d-connected to any requisite feature.
By \cref{le:requisite}, there exists an optimal predictor $\pi$ that only responds to requisite features.
Then 
\begin{align*}
    \mathbb{E}_\pi(\smash{\hat{Y}} \ |\ a) 
    &= \EE(\EE(\pred\mid \Payhat, a)\mid a) 
    & (\text{total expectation})\\
    &= \EE(\EE(\pred\mid \Payhat)\mid a) 
    & (\text{since $\pred\dsep A\mid \Payhat$})\\
    &= \EE(\EE(\pred\mid req(\Payhat))\mid a) 
    & (\text{by \cref{le:requisite}})
    \\
    &= \EE(\EE(\pred\mid req (\Payhat) )) 
    & (\text{by assumption})
    \\
    &= \EE(\pred).
    & (\text{total expectation})
\end{align*}
Therefore $\ATV(\smash{\hat{Y}})=0$.
It follows that $\ITV \le 0$, i.e. there is no introduced total variation.

For the completeness direction of the proof, in Appendix \ref{sec:app_itv_proofs} we construct a compatible SCM with $\ITV > 0$ for any SL graph in which $A$ is d-connected to a requisite feature.
\end{proof}

For example, in Figure \ref{fig:zero_one},  $D \in \Payhat$ is d-connected to $A$ and is requisite, and thus the graph satisfies the ITV criterion. This graph is therefore compatible with an ITV incentive, as verified for the particular model stated in version 2, where $\ITV=0.588 > 0$ for the only optimal predictor.

The ITV criterion can be broken down into two conditions, each with an easily interpreted meaning.
The first condition says that it is only possible for a predictor to introduce total variation if some feature $W$ can statistically depend on the sensitive attribute $A$.
Otherwise, the total variation of $\preds$ will be zero (even if the labels $Y$ are strongly dependent on $A$), and so ITV cannot be positive.
The second condition says that ITV can only be incentivised if such a feature $W$ is important for optimal predictions.
Indeed, if $A$ is only connected to features that are unimportant for predicting $Y$, then an optimal predictor may avoid any dependency with $A$.
Note that these conditions can be stated purely in terms of conditional independencies rather than d-separation, so the ``only if'' part of the theorem can be adapted to settings where we know the joint distribution rather than the graph.

While it is relatively easy to see that both conditions are necessary, the theorem also establishes the converse: that jointly satisfying the conditions is sufficient for an introduced total variation incentive under some model compatible with the graph.
This latter \emph{completeness direction} of the proof is related to the corresponding completeness proof for d-separation \citep{geiger1990d}. However, our result is not a corollary of the d-separation result. In particular, the completeness results of d-separation rely on being able to freely specify conditional probability distributions for all nodes. This is not possible here since we are concerned with \textit{optimal} predictors, and so the distribution at $\preds$ cannot be independently selected \citep{Everitt2021agent}.

\Cref{theorem:itv_gc} thus gives some insight to the question posed by the title of this paper: predictions can be unfair in spite of fair labels, because an optimal predictor may need to depend on some feature that is correlated with $A$.
The fact that the conditions of the ITV criterion theorem are easily satisfied indicates that introduced unfairness is possible in a wide range of scenarios.

\subsection{P-admissible loss functions}
\label{sec:padm}
Ideally, we would not just quantify disparities introduced by a system, but would understand what components of the system may be controlled to reduce them.
One such component is the training loss function.

As a simple example, if zero-one loss is used, this can  lead to a large ITV, because small group differences can be amplified into large differences in ``all or nothing'' predictions (recall version 2 of the hiring test prediction example). %
Can this amplification be prevented by choosing a ``better behaved'' loss function? We investigate an existing class of loss functions that incentivise the predictor to output the expected value of $Y$ given the system inputs.

\begin{definition}[P-admissible; \citealp{miller1993loss}]
For an SL SCM $\mathcal{M}$ with utility variable $U$, we say that $f_U$ is a \emph{P-admissible loss function} if $\pi(\Payhat) := \mathbb{E}(Y \mid \Payhat) $ is an optimal predictor.
\end{definition}

Examples of P-admissible loss functions include mean squared error and cross-entropy loss \cite{miller1993loss}.
For some graphs, using a P-admissible loss function rules out the possibility of an ITV incentive.

\begin{restatable}[P-admissible ITV criterion]{theorem}{theorempadmisgc}
\label{thm:padmissible_gc}
An SL graph $\mathcal{G}$ is compatible with an SCM $\mathcal{M}$ for which $f_U$ is P-admissible and all optimal predictors have $\ITV>0$ only if in addition to the conditions of Theorem \ref{theorem:itv_gc}, $A \notin \Payhat$ and $A$ is d-connected to $U$ conditioned on $\Payhat$.
\end{restatable}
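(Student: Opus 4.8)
The plan is to prove this necessity (``only if'') statement by contraposition on the two \emph{new} conditions, exploiting the defining property of P-admissible loss: the predictor $\pi(\Payhat) := \mathbb{E}(Y \mid \Payhat)$ is always optimal. Because this particular optimal predictor is available in every P-admissible SL SCM, to contradict the hypothesis that \emph{all} optimal predictors have $\ITV>0$ it suffices to show that $\pi$ itself has $\ITV \le 0$ whenever one of the two conditions fails. The concrete target in each failure case is the identity $\mathbb{E}_\pi(\haty \mid a) = \mathbb{E}(Y \mid a)$ for every value $a$ of $A$, which immediately forces $|\ATV(\haty)| = |\ATV(Y)|$ and hence $\ITV = 0$, violating the hypothesis. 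The remaining condition, existence of a requisite feature d-connected to $A$, needs no new work: the given P-admissible SCM already witnesses that $\mathcal{G}$ is compatible with an SCM whose optimal predictors all have $\ITV>0$, so the soundness direction of \cref{theorem:itv_gc} supplies it directly.

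First I would dispatch the case $A \in \Payhat$. Here $A$ is one of the conditioning features, so the tower property yields $\mathbb{E}_\pi(\haty \mid a) = \mathbb{E}\big(\mathbb{E}(Y \mid \Payhat) \mid a\big) = \mathbb{E}(Y \mid a)$, and therefore $\ITV = 0$. This establishes that $A \notin \Payhat$ is necessary.

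The case where $A$ is d-separated from $U$ given $\Payhat$ is the crux and needs a short graphical argument before the same computation applies. The key claim is that $A \dsep U \mid \Payhat$ in fact forces the distributional independence $Y \perp A \mid \Payhat$. I would argue this by classifying the $A$--$U$ paths: since $U$'s only parents are $Y$ and $\haty$, every such path terminates in $Y \to U$ or $\haty \to U$. Any path ending $\dashrightarrow \haty \to U$ must enter $\haty$ through one of its parents $W \in \Payhat$; as the edge $W \to \haty$ is outgoing from $W$, the node $W$ is necessarily a non-collider on the path, so conditioning on $W \in \Payhat$ blocks it. Hence \emph{all} paths routed through the decision node $\haty$ are blocked automatically, and $A \dsep U \mid \Payhat$ is equivalent to blocking every path ending $\dashrightarrow Y \to U$; since the appended node $Y$ is a non-collider with $Y \notin \Payhat$, this is exactly $A \dsep Y \mid \Payhat$. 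Soundness of d-separation then upgrades this to $Y \perp A \mid \Payhat$.

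With $Y \perp A \mid \Payhat$ in hand, the computation mirrors the first case: replacing $\mathbb{E}(Y \mid \Payhat)$ by $\mathbb{E}(Y \mid \Payhat, A{=}a)$ inside the outer expectation over $\Payhat \mid A{=}a$ and applying the tower property gives $\mathbb{E}_\pi(\haty \mid a) = \mathbb{E}(Y \mid a)$, so again $\ITV = 0$. I expect the main obstacle to be precisely the graphical step in this second case, namely the verification that conditioning on the \emph{full} parent set $\Payhat$ severs every route from $A$ into $U$ via $\haty$, leaving only the route through the label $Y$; the collider/non-collider bookkeeping around $\haty$ and the terminal node $U$ must be handled carefully. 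The probabilistic identities on either side of this claim are routine applications of the tower property and conditional independence.
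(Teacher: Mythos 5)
Your proposal is correct and follows essentially the same route as the paper's proof: contraposition on the two extra conditions, using the fact that P-admissibility makes $\pi(\Payhat) = \mathbb{E}(Y \mid \Payhat)$ an optimal predictor, and then showing $\mathbb{E}_\pi(\haty \mid a) = \mathbb{E}(Y \mid a)$ via the tower property when either condition fails, with the requisite-feature condition inherited from the soundness direction of Theorem~\ref{theorem:itv_gc}. The only difference is that you explicitly prove the graphical step that $A \dsep U \mid \Payhat$ implies $A \dsep Y \mid \Payhat$ (by classifying $A$--$U$ paths through $Y \to U$ versus $\haty \to U$), a conversion the paper's proof asserts without justification, so your write-up is if anything slightly more rigorous on that point.
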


\begin{proof} %
If the \cref{theorem:itv_gc} conditions do not hold, then we already know that $\ITV=0$ for at least one optimal policy under any loss function, including P-admissible ones.
Consider therefore an SL graph $\G$ in which the extra graphical condition of \cref{thm:padmissible_gc} does \emph{not} hold.
Then for any $a \in \dom(A)$:
\begin{align*}
    \mathbb{E}( \haty \ | \ a ) 
      & = \mathbb{E}( \mathbb{E}( Y \mid \Payhat) \ | \ a )
      &(\text{by P-admissibility})\\
      & = \mathbb{E}( \mathbb{E}( Y \mid \Payhat, a) \ | \ a )                        &(\text{see below})\\
      & = \mathbb{E}( Y \ | \ a ) .
      &(\text{law of total probability})
\end{align*}
The second equality holds if either
(a) $A \in \Payhat$, or
(b) $A$ is d-separated to $Y$ conditioned on $\Payhat$.
Since we have assumed that the the extra condition of \cref{thm:padmissible_gc} does not hold, either (a) or (b) must be true.
Thus the second equality follows, and we have established that
$\mathbb{E}( \haty \ | \ a ) = \mathbb{E}( Y \ | \ a ) $ for all $a$.
From this it follows that $\ITV=0.$

This establishes that if the graphical conditions of \cref{thm:padmissible_gc} do not hold and the loss is P-admissible, then $\ITV=0$ for an optimal policy.
\end{proof}

Compared to \cref{theorem:itv_gc}, \cref{thm:padmissible_gc} adds the extra condition that $A$ has to be a non-parent of $\preds$ and d-connected to $U$.
This is the graphical condition for $A$ to provide additional \emph{value of information} \citep[Thm.~9]{Everitt2021agent}.
That is, a predictor with access to $A$ can have lower loss than one without access.
Since adding $A$ as an observation prevents ITV, knowing $A$ must help the predictor 
disentangle information about $A$ and $Y$.
In other words, predictors with P-admissible loss can become unfair in spite of fair labels because they are unable to disentangle information about $A$ and $Y$.

For example, Figure \ref{fig:zero_one}
does not satisfy the extra condition of Theorem \ref{thm:padmissible_gc} as the only paths from $A$ to $Y$ are blocked by $D$. Conceptually, the predictor therefore lacks incentive to infer $A$. Indeed, if a P-admissible loss function is used, the optimal policy becomes $f_{\preds}(maths)=0.51, f_{\preds}(stats)=0.49$ which gives $\ITV = 0$, as we saw in version 1.
\looseness=-1

We now present a case that satisfies the conditions of Theorem \ref{thm:padmissible_gc}, and allows for $\ITV>0$ even under P-admissible loss.

\paragraph{Example: Music test prediction.}

\begin{figure}[h!]
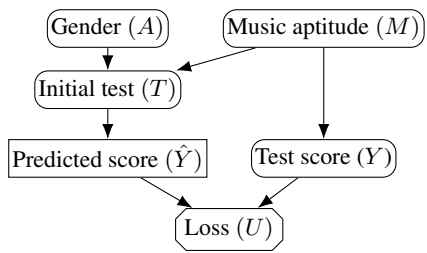

    \centering
\begin{influence-diagram}
   \setcompactsize[node distance=0.4cm]
   \setrectangularnodes
   \node (middle) [phantom] {};
   \node (hatY) [below left = of middle, decision] {Predicted score $(\hat{Y})$};
   \node (Y) [below right = of middle] {Test score ($Y$)};
   \node (X1) [above = of hatY] {Initial test $(T)$};
   \node (A) [above = of X1, yshift=-1mm] {Gender $(A)$};
   \node (X2) at (Y|-A) {Music aptitude $(M)$};
   \node (U) [below = 1.2 of middle, utility] {Loss $(U)$};
   \edge {A} {X1};
   \edge {X1} {hatY};
   \edge {X2} {X1};
   \edge {X2} {Y};
   \edge {hatY, Y} {U};
\end{influence-diagram}
\caption{SL graph representing music test prediction.} \label{fig:music}
\end{figure}

Consider the SL graph in Figure \ref{fig:music}, which  represents a music test scenario in which a model is trained to predict the outcome $Y\in \{0,1\}$ of a test taken at the end of a music course (adapted from \citealp{chiappa2019causal}). The prediction is based only on an initial test outcome $T\in \{0,1\}$, which has a gender bias.
Assume equal numbers of females and males, both with equal numbers of low and high musical aptitude (represented by $M=0,1$ respectively), take tests $T$ and $Y$. Suppose that 95\% of individuals with high aptitude $(M=1)$ pass the final test ($Y=1$), compared to 5\% of individuals with low aptitude $(M=0)$. Suppose that 90\% of high aptitude females pass the \emph{initial} test ($T=1$), compared to 100\% of males. Low aptitude individuals also pass test $T$ with 5\% probability.

In this scenario, the predictor has an incentive to learn $A$, since learning $A$ would enable it to better understand what the biased test $T$ says about true aptitude $M$.
Formally, the conditions of Theorem \ref{thm:padmissible_gc} are satisfied, since $A$ is d-connected to the requisite feature $T$, and conditioning on $T$ opens the path from $A\notin \Payhat$ to $U$ (via $T$, $M$, $Y$).
As expected, we find that the optimal predictor does have an introduced total variation under both zero-one loss
and under mean squared error %
(see \cref{tab:music-predictors}).

\begin{table}[]
    \centering
    \begin{tabular}{|c|c|c|c|c|}
    \hline
        $T$ & $A$ & $\pred$ (0-1) & $\pred$ (P-adm) & $\pred$ (P-adm+$A$ feature)\\\hline
        0   & 0   & \multirow{2}{*}{0}& \multirow{2}{*}{0.1}& 0.01 \\\cline{1-2}\cline{5-5}
        0   & 1   & & & 0.14 \\\hline
        1   & 0   & \multirow{2}{*}{1} &\multirow{2}{*}{0.905} & 0.907  \\\cline{1-2}\cline{5-5}
        1   & 1   & & & 0.903 \\\hhline{|=|=|=|=|=|}
        \multicolumn{2}{|c|}{}      & 0-1  & P-adm & P-adm+$A$ feature\\
        \hline
        \multicolumn{2}{|c|}{ITV} & 0.05 & 0.04 & 0\\
         \hline
    \end{tabular}
    \caption{Impact of losses and features on the music example. The final column gives the values attained if $A$ was also added as a feature.}
    \label{tab:music-predictors}
\end{table}

This shows that an introduced variation can arise even when the training labels are completely unbiased, i.e.\ when $A$ is independent of $Y$ and $\ATV(Y)=0$. \citet{wang2019balanced} describe a similar dynamic in the context of image classification.

\paragraph{Removing an ITV incentive.}
The good news is that preventing the predictor from trying to infer $A$ is often as simple as providing $A$ as a feature to the predictor.
We state this important insight as a corollary of \cref{thm:padmissible_gc}.

\begin{corollary}
\label{cor:p-adm}
If the sensitive variable $A$ is available as a feature to the predictor,
then $\mathcal{G}$ is not compatible with an ITV incentive 
under P-admissible loss functions.
\end{corollary}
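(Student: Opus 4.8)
The plan is to obtain the corollary as an immediate contrapositive of \cref{thm:padmissible_gc}. That theorem lists, as \emph{necessary} conditions for an SL graph $\mathcal{G}$ to be compatible with an SCM having P-admissible loss and $\ITV>0$ for all optimal predictors, the requirement that $A \notin \Payhat$ (in addition to the \cref{theorem:itv_gc} conditions and $A$ being d-connected to $U$ given $\Payhat$). The hypothesis of the corollary---that the sensitive variable $A$ is available as a feature to the predictor---is exactly the graphical statement $A \in \Payhat$, which directly negates the necessary condition $A \notin \Payhat$. Hence no SCM compatible with $\mathcal{G}$ can simultaneously have P-admissible loss and force $\ITV>0$ on every optimal predictor, which is precisely the assertion that $\mathcal{G}$ is not compatible with an ITV incentive under P-admissible loss.

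To make the argument self-contained, I would also verify the conclusion directly, reusing branch (a) of the proof of \cref{thm:padmissible_gc}. Under P-admissible loss the predictor $\pi(\Payhat) := \mathbb{E}(Y \mid \Payhat)$ is optimal by definition, so it suffices to show this particular optimal predictor has $\ITV=0$. The key observation is that, because $A \in \Payhat$, conditioning on $\Payhat$ already fixes the value of $A$; thus on the event $A=a$ we have $\mathbb{E}(Y \mid \Payhat) = \mathbb{E}(Y \mid \Payhat, A=a)$. Taking the outer expectation over $\Payhat$ given $A=a$ and applying the tower property then collapses $\mathbb{E}(\hat{Y} \mid a) = \mathbb{E}(\mathbb{E}(Y \mid \Payhat) \mid a)$ to $\mathbb{E}(Y \mid a)$, for every $a \in \dom(A)$. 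This yields $\ATV(\hat{Y}) = \ATV(Y)$ and hence $\ITV=0$, exhibiting at least one optimal predictor that introduces no total variation and thereby ruling out an ITV incentive.

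I expect essentially no obstacle here, since the corollary is a direct specialisation of the case the theorem already excludes. The only point requiring care is the faithful translation of the informal phrase ``available as a feature'' into the formal graphical condition $A \in \Payhat$, and the accompanying check that this condition is exactly case (a) of the proof of \cref{thm:padmissible_gc}. It is also worth noting in passing that the conclusion holds for \emph{every} compatible SCM, not merely generically: once $A$ is a parent of $\hat{Y}$, the displayed collapse of $\mathbb{E}(\hat{Y}\mid a)$ to $\mathbb{E}(Y\mid a)$ goes through for any choice of structural functions and exogenous distribution, so the absence of an ITV incentive is robust across the whole model class associated with $\mathcal{G}$.
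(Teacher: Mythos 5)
Your proof is correct and takes essentially the same route as the paper: the corollary is intended as an immediate contrapositive of \cref{thm:padmissible_gc}, whose necessary conditions include $A \notin \Payhat$, and your supplementary direct verification simply re-runs branch (a) of that theorem's proof. Nothing is missing.
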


This result generalises the observation by \citet{chiappa2019causal} (revisited from \citealp{kusner2017counterfactual}) that a linear least-squares predictor with access to the sensitive variable $A$ will ``strip off'' the bias in $T$ 
from $\hat{Y}$ when the data generating 
process consists of linear relationships. 
This challenges the notion of ``fairness through unawareness'', as it suggests that making the sensitive attribute available as a feature can improve fairness when labels are fair.
\Cref{cor:p-adm} %
reveals that a similar dynamic holds  %
even when the data generating process is nonlinear.
Indeed, adding $A$ as feature in our music example results in $\ITV = 0$ under the optimal policy described in \cref{tab:music-predictors}.

As with any technique to ensure fairness, making $A$ available as a feature should not be done without an understanding of the context.
In particular, since $\ITV=0$ is a specific group-level measure, it does not come with individual-level guarantees.
In the music test example, as the initial test has lower accuracy for women, women who pass the initial test receive a slightly lower prediction when $A$ is used explicitly compared to when it is not (0.903 instead of 0.905, see \cref{tab:music-predictors}). 
Even though this negative effect is offset by the higher score given to women who failed the test (0.14 instead of 0.1),
this may still be perceived as unfair by the high aptitude women who passed the test $T$.

We note that in the case of classification, requiring a discrete deterministic prediction will mean that a P-admissible loss function cannot be used. For instance, if mean squared error is used to produce $\smash{\hat{Y}}=p \in [0,1],$ but a binary accept/reject is required, then thresholding (e.g.\ at 0.5) reduces to the zero-one loss case, and may give $\ITV> 0$, even if the Theorem \ref{thm:padmissible_gc} criteria are met. In this case, randomising the result (accepting with probability $p$) preserves the result.
However, our results do not rely on randomness in general. E.g.\ consider situations where the prediction task is inherently continuous, such as a sum of
money paid out to an insurance customer. Then there would
be no need to randomise (or threshold) and the results would
be preserved.

\section{Path-specific Introduced Effects}
 \label{sec:psie}

In previous sections, we have examined introduced unfairness for statistical definitions of unfairness.
However, causal definitions can offer a richer understanding of unfairness \citep{pearl2009causality,kusner2017counterfactual,zhang2017causal,loftus2018causal,chiappa2019path,chiappa2020general,oneto2020fairness}. 
In this section, we consider a notion of introduced unfairness based on causal effects restricted to certain paths, referred to as \emph{path-specific effects} (PSEs). We first recap causal interventions and path-specific effects, and then adapt the idea of introduced unfairness to define \emph{introduced path-specific effects}, and illustrate how they may be used to examine the source of an introduced effect.

\subsection{Background on path-specific effects}
As well as allowing us to investigate the result of conditioning on a particular variable, SCMs also allow us to investigate the result of \emph{intervening} on a particular variable, to answer causal questions.
Formally, an intervention in an SCM $\scim$ consists in setting a variable $X$ to the value $x$ by replacing the structural function $f_X$ with a constant function $f_X=x$.
The variables in the modified model are referred to as $V_x$.
Interventions on $X$ only alter the values of variables descending from $X$, so $V_x=V$ for non-descendants of $X$. \emph{Path-specific} interventions are a more targeted type of intervention, that only propagate along specific paths. %
While global interventions allow us to reason about the total causal effect of a variable, for example to answer the question, ``What effect did being male have on being hired in a job application?'', path-specific interventions enable us to reason about the effect along a subset of paths, for example to answer a more fine-grained question, ``What effect did indicating male associated hobbies on resumes have on being hired in a job application?''.

\begin{definition}[Path-specific effect; \citealp{pearl2001direct}]
For a given edge-subgraph $\mathcal{P}$ specifying  a set of paths in an SCM $\scim$, let $\mathcal{M}_{\mathcal{P}}$ be a modified version of $\mathcal{M}$ in which all function inputs not in $\mathcal{P}$ are kept fixed at a baseline value $A=a_0$. That is, replace each structural function $f_X(V^1, \dots , V^k)$ in $\mathcal{M}$
with the function $(f_X)'$, equal to $f_X$, except that if $V^i \rightarrow X$ is not in $\mathcal{P}$, then when evaluating at $\exovals$ the argument $V^i$ is replaced with the constant $V^i_{a_0}(\exovals)$.
The \emph{path-specific response} $V_{\mathcal{P}(a_0 \rightarrow a_1)}$ is defined as $V_{a_1}$ in the model $\mathcal{M}_{\mathcal{P}}$.
The  \emph{path-specific effect} (\PSE) on a real-valued variable $V$ is:
$$ \PSE(V) = \mathbb{E}( V_{\mathcal{P}(a_0 \rightarrow a_1)}) - \mathbb{E}( V_{a_0}) . $$
\end{definition}

\subsection{Auditing ML system outputs for fairness -- a risk when labelling paths to $\preds$ as fair/unfair}

\noindent Path-specific effects can be used to inform judgements about whether a decision policy is unfair. For example, in the case of Berkeley’s alleged sex bias
in graduate admissions, the original analysis considered direct effects ($\textit{Gender} \rightarrow \textit{Outcome}$) to be unfair, but indirect effects via ($\textit{Gender} \rightarrow \textit{Department} \rightarrow \textit{Outcome}$) to be fair \cite{bickel1975sex,pearl2009causality}. This approach assumes that societal considerations can be used to label paths between the sensitive variable $A$ and the outcome as fair (or ``justified'') or unfair, and outcomes are declared unfair if (significant) effects are found along any unfair path.

Suppose instead that the aim is to audit the fairness of a trained machine
learning (ML) system, by investigating the system outputs. While understanding which paths are responsible for disparate $\preds$ is crucial, here we show that the training process must also be taken into account before attempting to label paths from a sensitive variable $A$ to $\smash{\hat{Y}}$ as fair or unfair.

\paragraph{Example: Penalising female dominated degrees.}

\begin{figure}[hb!]
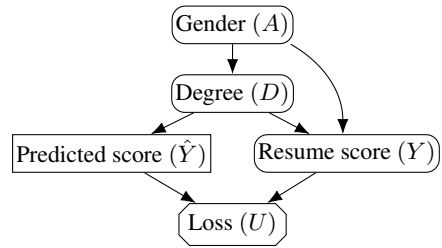

    \centering
    \begin{influence-diagram}
       \setcompactsize[node distance=0.4cm]
       \setrectangularnodes
       \node (X2) {Degree $(D)$};
       \node (hatY) [below left = of X2.south, decision] {Predicted score $(\hat{Y})$};
       \node (Y) [below right = of X2.south] {Resume score $(Y)$};
       \node (A) [above = of X2] {Gender $(A)$};
       \node (U) [below = 1.2 of X2, utility] {Loss $(U)$};
       \edge {A} {X2.north};
       \edge {X2} {hatY};
       \edge {X2} {Y};
       \edge {hatY, Y} {U};
       \path (A) edge[->, bend left] (Y);
    \end{influence-diagram}
    \caption{Penalising female degrees example. }
    \label{fig:biased_recruiter_B}
\end{figure}

In the SL graph of Figure \ref{fig:biased_recruiter_B}, a system $\preds$ is used to score resumes based on the applicant's degree $D$. The system is trying to emulate scores $Y$ given by humans, that also directly depend on applicant's gender $A$.

If we deem degree to be a reasonable decision criterion for the job in question, it is tempting to label the path $A\to D \to \preds$ as fair, and therefore to conclude that the decision given by $\preds$ must be fair.

Assume for simplicity that there are only two degrees, maths and statistics, and that they are equally valuable, with reviewers giving them both a score of 5. In addition, suppose that male and female applicants are (unfairly) given an additional score of $1$ and $-1$ respectively. Moreover, suppose that 80\% of mathematicians are male, while 80\% of statisticians are female, and that mean squared error is used as the loss function. Then the optimal predictor is
$f_{\preds}(maths) = \mathbb{E}( Y \ |\ D=maths) = 5 + 0.6 = 5.6$ and $f_{\preds}(stats) = \mathbb{E}( Y \ |\ D=stats) = 5 - 0.6 = 4.4.$~\looseness=-1

Thus statistics is given a lower score by the model, even though it is just as valuable as mathematics, \emph{because it has a higher percentage of women}. Knowing this, we may instead conclude that these decisions at $\preds$ are \textit{not} fair, because the path-specific effect of $A$ on $\hat Y$ via $D$ is stronger than the corresponding path-specific effect on $Y$.
To formalise this type of situation, we define path-specific introduced effects.

\begin{definition}[Path-specific introduced effect]
\label{def:psie}
Let $\scim$ be an SL SCM with real-valued variables $Y$ and $\hat{Y}$. Let $\mathcal{P}$ denote an edge-subgraph with paths from $A$ to $Y$ and $A$ to $\smash{\haty}$.
Then the \emph{path-specific introduced effect} (PSIE) is defined as\looseness=-1
\begin{equation*}
\PSIE_\mathcal{P}  
=
|\PSE_{\mathcal{P}}(\haty)| - |\PSE_{\mathcal{P}}(Y)|. \end{equation*} 
In particular, if $\mathcal{P}$ consist of all directed paths of the form $A \dashrightarrow X \dashrightarrow \preds$ and $A \dashrightarrow X \dashrightarrow Y$, then
$\PSIE_\mathcal{P}$ is the \emph{path-specific introduced effect via variable $X$}.
\end{definition}

For example, in Figure \ref{fig:biased_recruiter_B}, let $\mathcal{P} := \{ A\rightarrow D$\linebreak[1] $\rightarrow \smash{\hat{Y}}$,
$A\rightarrow D \rightarrow Y$\}. If we compute $\PSIE_\mathcal{P}$ with man as the baseline group for $A$, we find that $\PSIE_\mathcal{P} = 0.72 - 0 = 0.72$ and conclude that there is a PSIE via $D$. 
In other words, $D$ is carrying a (problematic) amplified effect, as a result of the path partially reproducing the spurious effect along the path \mbox{D $\leftarrow A \rightarrow Y$}.
As the effect along $A \to D \to Y$ may be considered fair in this example, we may say that the effect of $A$ on $\hat Y$ via $D$ is greater than that justified by the corresponding path to $Y$.
Note that looking at the introduced total variation would not identify this phenomena, as $\ITV=-1.28<0$ in this example.

In this example, the training labels $Y$ are unfair. However, a PSIE via a variable $X$ can also occur in cases where $Y$ is considered fair. For example, assume that, in Figure \ref{fig:biased_recruiter_B}, $A \rightarrow Y$ is replaced with $A \rightarrow \textit{Coding Experience} \rightarrow Y$ and that coding is deemed fair (despite more men having coding experience) since it is relevant for the job. Then $D$ can nonetheless still carry a problematic PSIE exactly as before -- statistics will be still be penalised for having a higher percentage of women.

These examples highlight the importance of taking PSIE into account when deciding whether an effect along a path is fair or unfair.

\paragraph{Relationship to proxy unfairness.}

\citet{kilbertus2017avoiding} define proxy discrimination as arising if a causal path from $A$ to a decision is blocked by a variable deemed to be a proxy (e.g.\ someone's name may be considered a proxy for their gender), but does not describe how to ascertain whether a variable should be considered a proxy. PSIE gives information that can help judge whether a variable is a (problematic) proxy, namely whether it carries an amplified effect from $A$. In the examples described above, the seemingly harmless degree carries an amplified effect; it thus acts as a harmful proxy for gender.

\subsection{Enforcing ML system outputs to be fair - A risk when reducing unfair PSEs.}

Several approaches in the literature are based on enforcing path-specific effects or counterfactual extensions that are considered problematic in the data 
not to be transferred to the system (e.g.\ \citet{nabi2018fair,chiappa2019path}).
These approaches implicitly assume that the prediction model and training data share the same underlying causal structure, and ensure that the effect on any path corresponding to an unfair path underlying the data is reduced, either by constraining the objective during training \citep{nabi2018fair} or by performing a path-specific counterfactual prediction at test time \citep{chiappa2019path}. However, the discussion above indicates that effects on paths that are deemed fair also need to be considered. Specifically, consider Figure \ref{fig:biased_recruiter_B}, but with an additional direct path $A \to \preds$, so that the causal structure underlying $Y$ and $\preds$ are the same. Ensuring that the effect along the harmful path $A \rightarrow Y$ is not reproduced as $A \rightarrow \preds$ is not sufficient to ensure fairness: the effect via $A \rightarrow D \rightarrow \preds$ needs to also be understood. Any method that constrains the learning to reduce the effect along ``unfair'' paths risks transferring this effect to a ``fair'' path, such as the one through $D$. Methods that only learn the causal model underlying the data without such constraints might still carry some risk.
PSIE can be used to formalise these risks. In addition, our formalism can be used to understand when such an amplified effect (naturally) arises as a consequence of optimality, i.e.\ when such an effect is incentivised.

\subsection{Incentives for PSIE}
As we did for ITV in the previous section, here we ask: when may PSIE be incentivised by a training setup?
The conditions are very similar to Theorem \ref{theorem:itv_gc}.

\begin{restatable}[PSIE graphical criterion]{theorem}{theorempsiegc}
\label{theorem:psie_gc}
An SL graph $\mathcal{G}$ is compatible with an SCM $\scim$ in which all optimal predictors have $\PSIE_{\mathcal{P}} > 0$ iff there is some path $p \in \mathcal{P}$ of the form $A \dashrightarrow W \rightarrow \yhat$ where $W\in req(\Payhat)$ is a requisite feature.
\end{restatable}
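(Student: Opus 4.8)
The plan is to mirror the proof of \cref{theorem:itv_gc}, splitting the ``iff'' into a \emph{soundness} direction (the criterion is necessary) proved directly, and a \emph{completeness} direction (the criterion is sufficient) handled by an explicit construction deferred to the appendix. For soundness I would show that whenever the criterion fails there is at least one optimal predictor with $\PSIE_{\mathcal{P}} \le 0$, so it cannot be that \emph{all} optimal predictors have $\PSIE_{\mathcal{P}} > 0$; taking the contrapositive gives the ``only if'' implication. The completeness direction then supplies, for any graph satisfying the criterion, a compatible SCM whose (unique) optimal predictor has $\PSIE_{\mathcal{P}} > 0$.

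For soundness, invoke \cref{le:requisite} to select an optimal predictor $\pi$ that responds only to requisite features, so the value of $\haty$ is determined by the requisite parents in $req(\Payhat)$. The key observation is how a path-specific intervention reaches $\haty$: in the model $\mathcal{M}_{\mathcal{P}}$ evaluated at $A=a_1$, each parent $W$ of $\haty$ supplies either its baseline value $W_{a_0}$ (when $W\to\haty\notin\mathcal{P}$, since off-$\mathcal{P}$ inputs are fixed at baseline) or its path-specific value $W_{\mathcal{P}(a_0\to a_1)}$ (when $W\to\haty\in\mathcal{P}$); and $W_{\mathcal{P}(a_0\to a_1)}$ can differ from $W_{a_0}$ only if some directed path $A\dashrightarrow W$ lies entirely within $\mathcal{P}$. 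Now suppose the criterion fails. Then for every requisite parent $W$ with $W\to\haty\in\mathcal{P}$ there can be no directed path $A\dashrightarrow W$ inside $\mathcal{P}$, because concatenating such a path with the edge $W\to\haty$ would yield a path $A\dashrightarrow W\to\haty$ in $\mathcal{P}$ with $W$ requisite, contradicting the failure. Hence every requisite argument of $\pi$ takes the same value in the path-specific scenario as in the baseline, so $\haty_{\mathcal{P}(a_0\to a_1)}=\haty_{a_0}$ pointwise and $\PSE_{\mathcal{P}}(\haty)=0$. It follows that $\PSIE_{\mathcal{P}}=|\PSE_{\mathcal{P}}(\haty)|-|\PSE_{\mathcal{P}}(Y)|=-|\PSE_{\mathcal{P}}(Y)|\le 0$, as required.

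For completeness, given a path $A\dashrightarrow W\to\haty$ in $\mathcal{P}$ with $W\in req(\Payhat)$, I would construct a compatible SCM generalising the ``penalising female degrees'' example of \cref{fig:biased_recruiter_B}. The construction should arrange that (i) the directed $\mathcal{P}$-path from $A$ to $W$ carries a genuine effect, so setting $A=a_1$ shifts $W$ away from its baseline; (ii) $W$ is made genuinely informative about $Y$ through a dependence not lying on $\mathcal{P}$ (e.g.\ a shared latent cause or a back-door), which makes $W$ requisite and forces every optimal predictor to respond to $W$; and (iii) the causal effect carried by the $A$-to-$Y$ paths inside $\mathcal{P}$ is small, ideally zero so that $\PSE_{\mathcal{P}}(Y)=0$, with off-path variables set to constants so they do not interfere. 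Choosing the data distribution so that the optimal response of $\haty$ to $W$ is sufficiently steep then yields $|\PSE_{\mathcal{P}}(\haty)|>|\PSE_{\mathcal{P}}(Y)|$, i.e.\ $\PSIE_{\mathcal{P}}>0$, for the optimal predictor.

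I expect the completeness direction to be the main obstacle, for the reason flagged after \cref{theorem:itv_gc}: because the predictor must be \emph{optimal}, its dependence on $W$ is pinned down by $\mathbb{E}(Y\mid\Payhat)$ under P-admissible loss, or by the maximising prediction under zero-one loss, rather than being freely specifiable, so the classical completeness argument for d-separation does not apply directly. The real work is to choose the distribution so that optimality \emph{forces} a response to $W$ large enough for the path-specific effect on $\haty$ to strictly dominate that on $Y$, and to verify this for \emph{all} optimal predictors simultaneously; as in the ITV case, I would relegate this construction to the appendix.
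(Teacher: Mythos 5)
Your soundness direction is correct and is essentially the paper's argument: \cref{le:requisite} supplies an optimal predictor depending only on requisite features, and your pointwise induction showing $\haty_{\mathcal{P}(a_0\to a_1)}=\haty_{a_0}$ (hence $\PSE_{\mathcal{P}}(\haty)=0$ and $\PSIE_{\mathcal{P}}\le 0$) usefully fills in a step that the paper states without proof.

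The completeness direction, however, contains a genuine gap: what you give is a list of desiderata, not a construction, and the sketched route would fail as stated. You propose to make $W$ requisite ``through a dependence not lying on $\mathcal{P}$ (e.g.\ a shared latent cause or a back-door)'', but $\mathcal{G}$ is fixed: you cannot add latent common causes or back-door structure, and must instead work with whatever d-connecting path between $W$ and $U$ (given $\Payhat\cup\{\haty\}\setminus\{W\}$) the graph actually provides. That path may be directed, may run through colliders whose observed descendants are features, and may intersect the directed path $A\dashrightarrow W$; each shape needs a different parameterisation, which is exactly why the ITV completeness proof in Appendix \ref{sec:app_itv_proofs} is a multi-case construction. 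Your further desideratum that $\PSE_{\mathcal{P}}(Y)$ be (near) zero is also in tension with requisiteness: when the $W$--$Y$ connection happens to be a directed path lying in $\mathcal{P}$, suppressing the effect of $A$ on $Y$ along $\mathcal{P}$ while keeping $W$ informative enough that \emph{every} optimal predictor must respond to it is not something you can simply stipulate. The missing idea that makes the paper's proof short is a reduction rather than a fresh construction: the PSIE criterion implies the \cref{theorem:itv_gc} criterion with the extra property that the $A$--$W$ path is directed and lies in $\mathcal{P}$, so the ITV completeness constructions can be reused unchanged; and because those constructions have essentially a single non-zero path from $A$ to $Y$ and at most one to $\preds$, the path-specific effects collapse to the statistical ones ($\PSE_{\mathcal{P}}(\haty)=\ATV(\haty)$, while $\PSE_{\mathcal{P}}(Y)$ equals either $0$ or $\ATV(Y)$), giving $\PSIE_{\mathcal{P}}\ge\ITV>0$ at once. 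To repair your proof, either perform this reduction or reproduce the full case analysis of the ITV construction inside your completeness argument.
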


\begin{proof} %
Let $\mathcal{G}$ be an SL graph $\G$ and $\mathcal{P}$ an edge-subgraph that includes no path $A\pathto W \to \pred$ via a requisite feature $W$.
By \cref{le:requisite}, for any compatible SCM there is an optimal predictor that only depends on requisite features.
Under this predictor $\PSE_{\mathcal{P}}(\pred)=0$.
Since $\PSIE_{\mathcal{P}}\leq |\PSE_{\mathcal{P}}(\pred)|$, we have established that the graphical criterion is sound.
A proof of completeness can be found in Appendix \ref{sec:app_psie_proofs}.
\end{proof}

For certain edge-subgraphs $\mathcal{P}$, the conditions for PSIE under P-admissible loss are identical to \cref{theorem:psie_gc}: for example when $\mathcal{P}$ contains only paths to $\pred$ and none to $Y$.
Whether less degenerate choices of $\mathcal{P}$ yield a stronger condition for P-admissible loss functions is an open question.

\section{Empirical Results}
\label{sec:empirical}
Our graphical criteria give conditions under which a graph is \emph{compatible} with ITV, or PSIE. But do these arise in practice?
For random distributions, d-connectedness almost always implies conditional dependence \citep{Meek1995}. Therefore the requisite feature $W$ required by \cref{theorem:itv_gc,thm:padmissible_gc,theorem:psie_gc}
will almost always have a dependency with $A$ and $U$ when the criteria are satisfied (under random distributions). However, this does not necessarily imply positive ITV. In particular, our completeness results establish the existence of some model where ITV is positive, but not how common positive ITV is. We address this second question empirically.

PyCID is an open source Python library for graphical models of decision-making \citep{fox2021pycid}. Using PyCID's method for generating random graphs, we sample SL SCMs with 6 nodes that satisfy the graphical criteria of \cref{theorem:itv_gc,thm:padmissible_gc}, and used a Dirichlet distribution to assign random distributions to each node.
Out of 1000 samples for each, we found that $20\%$ of the models satisfying the graphical criteria of \cref{theorem:itv_gc} have ITV greater than 0.01 under zero-one loss, while $16\%$ of the models satisfying the graphical criteria of \cref{thm:padmissible_gc} have ITV greater than 0.01 under P-admissible loss.
The results did not appear particularly sensitive to variations in the number of nodes or the edge density of the random graph.
The results can be reproduced via the linked colab%
\footnote{\url{https://github.com/causalincentives/pycid/blob/master/notebooks/Why_fair_labels_can_yield_unfair_predictions_AAAI_22.ipynb}}, which also shows how the examples discussed above can be analysed using PyCID.

\section{Related Work}
\label{related-work}

\paragraph{Statistical approaches.}
In Section \ref{sec:defining-introduced}, we discuss the relationship between ITV and separation and sufficiency, for example that lack of separation means that a dependency has been introduced (in the binary case), whereas ITV measures %
an increased (or decreased) disparity.

\paragraph{Causal approaches.}
The ability to account for the complex patterns that underlie the data generation process makes causal models a powerful tool for reasoning about fairness.
As such, causal models are increasingly used both for measuring and alleviating unfairness in ML systems \citep{chiappa2019causal,creager2020causal,loftus2018causal,nabi2019learning,plecko2020fair,qureshi16causal,russell2017worlds,zhang2018fairness,zhang2017causal}. The idea of inferring the presence of unfairness in data with path-specific effects and counterfactuals dates back to \citet{pearl2009causality} and \citet{pearl2016causal}. \citet{kilbertus2017avoiding,kusner2017counterfactual} and \citet{nabi2018fair} develop approaches for training ML systems that achieve a coarse-grained version of path-specific fairness, counterfactual fairness, and path-specific fairness respectively. The following work of \citet{chiappa2019path} and \citet{chiappa2020general} introduces general methods for achieving path-specific counterfactual fairness, while \citet{wu2019pc} discuss identification issues, and how to compute path-specific counterfactuals.~\looseness=-1

Attempts to describe the relation between the data and model outputs $Y$ and $\smash{\hat Y}$ have appeared in some of these works, with the goal of elucidating limitations of statistical fairness definitions at a high level \citep{chiappa2019causal,kilbertus2017avoiding}. \citet{zhang2018equality} is the first work to more thoroughly characterise the causal connection between the two variables, by linking %
the equalised odds criterion to the underlying causal mechanisms. 
This work differs from ours in several ways. Our goal in characterising the relation between $Y$ and $\smash{\hat Y}$ is not to connect statistical fairness definitions to the underlying data generation mechanisms, but to formalise the notion that models may introduced or amplify causal effects not present in the training labels. In addition, rather than reasoning about a trained model for $\smash{\hat Y}$, we also incorporate the training mechanism by considering the necessary behaviour of optimal predictors \citep{Everitt2021agent}. This enables us to %
characterise when policies are incentivised to introduce or amplify disparities that were not present in the training labels. %

\paragraph{Amplified disparity in context.}
There is also a broader literature that investigates the relationship between biased labels and biased models for particular applications, such as object recognition \citep{wang2019balanced, zhao2017men}. For example, this may result from the fact that even if $Y$ and $A$ are independent,
some features $X$ might be correlated with both $A$ and $Y$,
inducing a correlation between $A$ and $\preds$ \citep{wang2019balanced}.
This can be seen as an example of ITV.
In contrast to these works, we seek a theoretical understanding
of when introduced disparity will arise, particularly in
decision-making settings about individuals.

\section{Discussion}
\label{discussion}

\paragraph{Applicability of incentive criteria.} The graphical criteria can be used to analyse the potential incentives of a system that is yet to be built, or for which we lack access to model outputs for other reasons (e.g.\ a proprietary system). 
The necessary graphical knowledge may come from domain expertise, previous studies, or data.
For example, a developer or auditor may know that \textit{Ability} is a joint ancestor of \textit{Test score} and \textit{Job performance}, even if they are unable to measure this directly.
Using only this qualitative, ``graphical'' knowledge, our results establish how potential incentives for ITV and PSIE can be assessed.
A weakness is that incentives can only be excluded, not confirmed. 
For the latter task, access to the data distribution is needed.

\paragraph{Measuring introduced unfairness.} When we have access to the model's outputs, we may wish to measure its introduced unfairness. This is usually possible for ITV given appropriate data, as it is defined in terms of conditional probabilities, which can be easily estimated if the variables are observed.
Measuring PSIE is often more challenging, as it is defined in terms of PSEs, whose calculation typically require knowledge of the causal graph, and sometimes even the exact structural functions. The exact conditions for identifying the PSEs
are given by Theorems 4 and 5 of \citet{avin2005identifiability} for Markovian models (i.e. models in which every exogenous variable is independent and in the domain of at most one function $f_V$), and in Theorems 3 and 4 of \citet{shpitser2013counterfactual} for non-Markovian models.
Alternative definitions for PSE can also be used in the PSIE definition. See \cite{shpitser2013counterfactual} for details of a more readily estimated (though less general) variant.

\paragraph{Limitations and risks.}
In addition to the limitations of graphical models (e.g.\ the sensitivity of results to assumptions), our graphical criteria results pertain to \textit{optimal} policies. Trained models may be substantially suboptimal, if the model class is insufficiently powerful, or insufficient training data is used \cite{miller1993loss}. In addition, our graphical criteria give conditions for \textit{compatibility} with some  incentive -- meeting the criteria does not guarantee an incentive for all parameterisations. That said, our empirical results show that an incentive does arise a large proportion of the time.
Similarly, failing to meet the criteria guarantees that optimal policies without the property exist, but does not guarantee this for all optimal policies.  

It is especially important to take account of the limitations of fairness measures because if inappropriately applied, they could cause a 
failure to recognise and address actual injustices.
We highlight four limitations, starting with the most general:
1) Fairness definitions require us to define and formalise
group membership, an exercise that is fraught with practical and ethical difficulties \cite{west2019discriminating,kohler2018eddie, hanna2020towards}.
2) Narrow definitions of unfairness are liable to miss manifestations of injustice, and 
aspects of what we mean by unfairness \citep{kohler2018eddie}.
3) Group fairness definitions may overlook (un)fairness to individuals \citep{dwork2012fairness, kleinberg2016inherent}. 
4) Translating a causal effect into a normative fairness judgement is often complex. While we aim to assist with this as in the discussion around Figure \ref{fig:biased_recruiter_B}, this problem is far from resolved.

\paragraph{Findings.} In this paper we have proposed new definitions for introduced total variation (ITV) and introduced path-specific effects (PSIE) for supervised learning models, and established their graphical criteria. %
Key takeaways include:
\begin{itemize}
    \item \textbf{Models can be incentivised to introduce unfairness.} A predictor can be unfair in spite of fair labels if a feature it depends on is statistically dependent on the sensitive attribute $A$. In the case of P-admissible loss, disparity may still be introduced if the predictor needs to know $A$ to properly interpret its features.  
    \item \textbf{Incentives depend on the loss function and the features.} In some scenarios in which introduced total variation is incentivised, replacing the loss function with a P-admissible loss function is enough to remove the incentive. If additionally the sensitive attribute is (made) available as a feature, an incentive for introduced total variation is always avoided.
    ~\looseness=-1 %
    \item \textbf{Path-specific introduced effects can help labelling paths as fair or unfair.} 
    A path from $A$ to $\pred$ that looks fair at a first glance, may no longer seem fair if it is revealed that it carries an unwanted amplified effect.
    \item \textbf{It is difficult to rule out introduced disparity/effects.} The graphical criteria for ITV and PSIE are easily met.
    \item \textbf{Fair training labels do not always yield a fair model.}
\end{itemize}

\pagebreak
\section*{Acknowledgements}
We would like to thank for their comments, help, and discussions, Ben Coppin, James Fox, Lewis Hammond, William Isaac, Zac Kenton, Ramana Kumar, Claudia Shi, and Chris van Merwijk.
This work was supported in-part by the Leverhulme Centre for the Future of Intelligence, Leverhulme Trust, under Grant RC2015-067.
\bibliography{biblio}

\clearpage

\appendix

\section{Introduced Mutual Information, Separation and Sufficiency}
\label{sec:app_info_theory}

In this section, we present a version of Proposition \ref{lem:sep}
that is generalised to the case where the 
sensitive variable $A$, 
label $Y$, 
and prediction $\preds$
all have categorical (or even continuous) domain.
Moreover, we do not 
strictly require separation or sufficiency.
Rather, we only require 
a bound on the degree to which 
separation or sufficiency is violated.

\subsection{Background}
First, we recap four basic concepts from information theory.

Let $X,Y$ be a pair of discrete random variables, with the probability density function $p(x,y)$. Then the \emph{entropy} is given by $$H(X,Y)=-\sum_{x,y}p(x,y)\log p(x,y),$$
the \emph{conditional entropy} is given by
\begin{equation*}
\label{eqt:cond_entropy}
H(X\mid Y)=\sum_{x,y} p(x,y)\log p(x\mid y),
\end{equation*} 
and the \emph{mutual information} is given by
\begin{equation}
\label{eqt:mutual_information}
I(X;Y)=H(X)-H(X\mid Y).
\end{equation}
For variables $X,Y,Z$, with density $p(x,y,z)$, the \emph{conditional mutual information} is given by
$$I(X;Y\mid Z)=H(X\mid Z)-H(X\mid Y,Z).$$

\subsection{Introduced Mutual Information}
Proposition \ref{lem:sep} bounds introduced total variation,
which is only defined for discrete-valued sensitive attributes.
A similar notion that will generalise to 
continuous case is mutual information.
When applied to a prediction $\preds$, mutual information  
can quantify unfairness, and
has been termed the \emph{independence gap}
$\indgap=I(\preds;A)$. 
When applied to the label $Y$, the mutual information $I(Y;A)$
depends only on the data generating process up until now, 
so has been termed the \emph{legacy}
$\legacy=I(Y;A)$ \citep{hertweck2021gradual}.
By substracting the legacy from the independence gap, 
we can obtain a measure of introduced unfairness.
\begin{definition}[Introduced Mutual Information]
Let $Y$ and $\hat{Y}$ respectively be real-valued target and predicted 
label variables. Then, the \emph{introduced mutual information} (IMI) is 
$$\IMI=I(\hat{Y}; A)-I(Y;A).$$
\end{definition}

To bound the IMI, we will also use gradual versions of sufficiency and separation:
The degree of violation of separation is called 
the \emph{separation gap} $\sepgap=I(\hat{Y};A\mid Y)$, 
where $I(B;C \mid D)$ denotes the mutual information
of $B$ and $C$, conditional on $D$ \citep{hertweck2021gradual}. 
The sufficiency gap is $\sufgap=I(Y;A\mid \hat{Y})$.
If the separation gap (or sufficiency gap) is equal to zero,
then separation (sufficiency) is achieved.

\begin{proposition}[IMI, separation and sufficiency] \label{prop:IMI-sep-suff}
Let $Y$ be the label,
$\hat{Y}$ be the prediction, 
and $A$ be some sensitive attribute, 
each of which is a random variable with discrete domain. Then
$$\indgap - \legacy = \IMI= \sepgap - \sufgap.$$
\end{proposition}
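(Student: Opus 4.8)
The plan is to note that the first equality, $\indgap - \legacy = \IMI$, holds immediately by definition: since $\indgap = I(\hat{Y};A)$ and $\legacy = I(Y;A)$, we have $\indgap - \legacy = I(\hat{Y};A) - I(Y;A) = \IMI$. All the substance therefore lies in the second equality, $\IMI = \sepgap - \sufgap$, which I would prove using the chain rule for mutual information applied to $A$ and the \emph{pair} $(\hat{Y},Y)$, decomposed in the two possible orders.

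Concretely, I would first expand $I(A;\hat{Y},Y)$ twice. Grouping $Y$ first gives
$$I(A;\hat{Y},Y) = I(A;Y) + I(A;\hat{Y}\mid Y),$$
while grouping $\hat{Y}$ first gives
$$I(A;\hat{Y},Y) = I(A;\hat{Y}) + I(A;Y\mid\hat{Y}).$$
Since both right-hand sides equal the same quantity, I would equate them and rearrange to isolate the introduced term,
$$I(A;\hat{Y}) - I(A;Y) = I(A;\hat{Y}\mid Y) - I(A;Y\mid\hat{Y}).$$
Recognising the left-hand side as $\IMI$ and the right-hand side as $\sepgap - \sufgap$ (using $\sepgap = I(\hat{Y};A\mid Y)$ and $\sufgap = I(Y;A\mid\hat{Y})$, together with the symmetry $I(A;B)=I(B;A)$) completes the argument.

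The only thing requiring justification is the chain rule itself in this form, which is standard and follows directly from the entropy definitions recapped earlier: writing $I(A;\hat{Y},Y) = H(A) - H(A\mid\hat{Y},Y)$ and $I(A;\hat{Y}\mid Y) = H(A\mid Y) - H(A\mid\hat{Y},Y)$, the two expansions are just $H(A)-H(A\mid\hat Y,Y)$ split at the intermediate term $H(A\mid Y)$ or $H(A\mid\hat Y)$ respectively. Because $A$, $Y$, and $\hat{Y}$ are assumed to have discrete domain, all entropies are well-defined and finite, so there are no convergence or measurability obstacles. In short, there is no genuinely hard step: the entire content of the proposition is the symmetric two-way decomposition of $I(A;\hat{Y},Y)$, with the first equality being a restatement of the definition of $\IMI$.
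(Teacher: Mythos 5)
Your proof is correct and matches the paper's own argument: both treat the first equality as definitional and derive the second by applying the chain rule for mutual information (with symmetry) to the joint quantity $I(A;\hat{Y},Y)$ in its two possible orders. The only cosmetic difference is that you equate the two expansions and rearrange, whereas the paper substitutes both expansions directly into $I(\hat{Y};A)-I(Y;A)$ and cancels; these are the same computation.
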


\begin{proof}
The left-hand equality is immediate from definitions. The right-hand equality is proved as follows:
\begin{align*}
    &=\IMI \\
    &=I(\hat{Y};A)-I(Y;A) \\
    &= I(A;Y,\hat{Y}) - I(Y; A \mid \hat{Y}) - (I(A;\hat{Y},Y) - I(\hat{Y}; A \mid Y)) \\
    \intertext{\hfill \text{(Chain rule and symmetry of mutual information)}}
    &= I(\hat{Y}; A \mid Y) - I(Y; A \mid \hat{Y})  \\
    &= \sepgap - \sufgap. \qedhere
\end{align*}
\end{proof}

This proposition has two immediate consequences.
Firstly, it gives an alternative proof of why independence 
and sufficiency are mutually incompatible:
if the legacy is equal to $k \in \mathbb{R}$, 
then the sum of independence 
and sufficiency must be at least $k$, 
because separation is non-negative.

Secondly, it implies that the introduced mutual information 
can be bounded by the separation and sufficiency.
Since mutual information is non-negative, it follows that 
$-\sufgap \leq \IMI \leq \sepgap$.
If we know that the separation gap is bounded above by $m \in \mathbb{R}$, then so too is the IMI.
Likewise, if we know that the sufficiency gap is bounded above by $n \in \mathbb{R}$, 
then the IMI
is bounded below by $-n$.

To prove \cref{prop:IMI-sep-suff} for continuous variables,
replace the definition of mutual information given in equation (\ref{eqt:mutual_information}) with 
the definition for continuous random variables (see Definition 1.6.2, \citet{ihara1993information}),
and follow the identical steps of the proof above.
In particular, note that the conditional mutual 
information is symmetric, and that the chain rule 
for mutual information holds in the continuous case, 
just as in the discrete case
\citep[Theorem 1.6.3 - I.6]{ihara1993information}.

\section{Proof for ITV Incentives}
\label{sec:app_itv_proofs}

\theoremitvgc*

\begin{proof}[Proof of completeness]

We will adapt the construction used to prove Theorem 12 in \citet[Appendix C.2]{Everitt2021agent}.

Suppose the graphical criteria holds. That is, $A$ is (unconditionally) d-connected to $W$ via some path $p_{AW}$, and $W$ is requisite, i.e.\ is d-connected to $\target$ conditioned on ($\Payhat \cup \hat{Y} \setminus W$), via some path $p_{W\target}$.

Since $A$ is unconditionally d-connected to $W$, we have that $p_{AW}$ is of the form $p_{AW}: A \dashleftarrow T \dashrightarrow W$ where it may be that $A=T$ or $W=T$. We will handle the case that $T=W$ below (in case 3), so we may now assume that $T$ and $W$ are separate nodes. Let $T,A,W$ all have domain $\{0,1\}$, and let $T\sim\textrm{Bern}(0.9)$ (that is, $T=1$ with probability 0.9, $T=0$ otherwise), and let $T\pathto A$ be a \emph{copying path} where each node $V \ne T$ on the path copies the value of its parent $X$ on the path, i.e.\ $f_V(X)=X$.
This means that $P(A=T) = 1$.
We will return to the relationship between $W$ and $T$ later in the proof.\\

\noindent \emph{Case 1: $p_{WY}$ is directed.}\\
If $p_{WY}$ is directed, then we can generalise the construction from version 2 of the hiring test prediction example (Figure \ref{fig:zero_one}, Section \ref{sec:defining-introduced}).
Let the path $T\pathto W$ be a copying path, so $P(W=T)=1$.
Parameterise the path $p_{WY}$ so that $P(Y=1\mid W=w) = 0.49$ if $w=0$ and $0.51$ otherwise. That is, let $Y \sim Bern(0.49 + 0.02W).$
This means the optimal predictor under zero-one loss will be $\pred = W$. See \cref{fig:gc_itv_case1} for an illustration.

Since $P(A= T = W=\pred) = 1$, we have that $|\ATV(\pred)| = 1$.
Meanwhile, $|\ATV(Y)|=0.02$, which gives $\ITV=0.98>0$.\\

\begin{figure}[ht!]
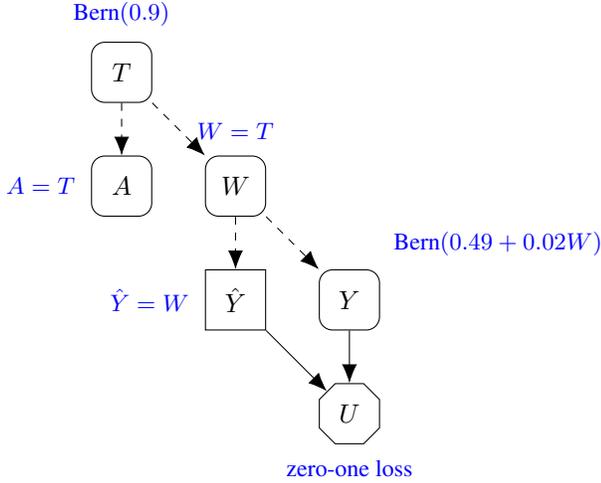

    \centering
    \begin{influence-diagram}
    
    \setrectangularnodes
    
    \node (W) {$W$};
    \node (A) [left = of W] {$A$};
    \node (T) [above = of A] {$T$};

    \node (D) [below = of W, decision] {$\hat{Y}$};
    \node (Y) [right = of D] {$Y$};
    \node (U) [below = of Y, utility] {$U$};

    \edge {D,Y}{U};
    \edge [dashed] {T}{A,W};
    \edge [dashed] {W}{Y,D};
    \begin{scope}[
        every node/.style = {draw=none, rectangle, node distance=1mm, color=blue}]
        \footnotesize
        \node [below = of U] {zero-one loss};
        \node [left = of A] {$A=T$};
        \node [above = of T] {Bern$(0.9)$};
        \node [above = of W] {$W=T$};
        \node [left = of D] {$\pred = W$};
        \node [above right = of Y] {Bern$(0.49+0.02W)$};
    \end{scope}
    \end{influence-diagram}
    \caption{Proof of \cref{theorem:itv_gc}, Case 1 illustration}
    \label{fig:gc_itv_case1}
\end{figure}

\noindent \emph{Case 2: $p_{WY}$ is not directed}\\
In this case, the conditionally active path $p_{WY}$ must have the following form, where each $C^i$ has a descendent $O^i$ which is a feature (that is, $O^i \in \Payhat$). We allow $m=0$ and for $Y$ and $S^m$ to be the same node, but $W$ and $S^m$ must be distinct.

\begin{center}
\begin{tikzpicture}[scale=0.9]
  \node at (0, 0) (W) {$\riparvar$};
  \node at (1, 1) (S0) {$\wusrc^0$};
  \draw (W) edge[<-, dashed] (S0);
  \node at (2, 0) (O1) {$\wucol^1$};
  \draw (S0) edge[->, dashed] (O1);
  \node at (3, 1) (S1) {$\wusrc^1$};
  \draw (O1) edge[<-, dashed] (S1);
  \node[minimum height = 1.5em, minimum width = 2em, draw=none] at (4, 0)
    (O2) {};
  \draw (S1) edge[->, dashed] (O2);
  \node at (4.5, 0.5) {$\cdots$};
  \node[minimum height = 1.5em, minimum width = 2em, draw=none] at (5, 1)
    (Sm1) {};
  \node at (6, 0) (Om) {$\wucol^m$};
  \draw (Sm1) edge[->, dashed] (Om);
  \node at (7, 1) (Sm) {$\wusrc^m$};
  \draw (Om) edge[<-, dashed] (Sm);
  \node at (8, 0) (U) {$\target$};
  \draw (Sm) edge[->, dashed] (U);
\end{tikzpicture}
\end{center}

\noindent \textit{Case 2a: Paths $p_{AW}$ and $p_{W Y}$ only intersect at $W$.}\\
The case is illustrated in \cref{fig:gc_itv_case2a} for $m=0$ and with $Y$ and $S^m$ different nodes.
Let
\begin{itemize}
    \item All variables have a domain that is a subset of $\{-1,0,1 \}$.
    \item $P(S^m=1)=0.6, P(S^m = -1)=0.4$.
    \item Let $S^m \dashrightarrow Y$ be a copying path ($f_V(X)=X$ for all variables $V$ on this path except $S^m$).
    \item $S^i \sim \Unif\{-1,1\}, i \ne m.$
    \item $f_U(\preds,Y) = \syhat.Y,$ so $\syhat$ aims to reproduce $Y$.
    \item ${C^i} = S^{i-1}\cdot S^i$ for all $i$.
    \item ${O^i} = C^i$ for all $i$. The $O^i \in \Payhat$ are observed.
    \item As described above, $T\sim Bern(0.9),$ and $A=T$.
    \item $f_W = A\cdot S^0,$ so $W$ reveals $S^0$ when $A=1$, else $W$ is uninformative when $A=0$.
    \item All variables in $\mathcal{G}$ not mentioned or on the paths described, are 0.
\end{itemize}

\begin{figure}[ht!]
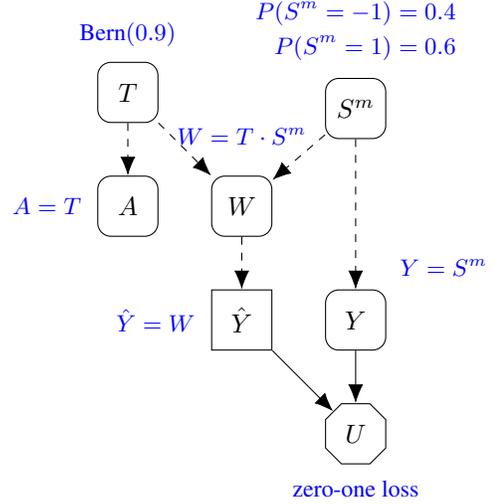

    \centering
    \begin{influence-diagram}
    
    \setrectangularnodes
    
    \node (W) {$W$};
    \node (A) [left = of W] {$A$};
    \node (T) [above = of A] {$T$};
    \node (S) [above = 2 of Y] {$S^m$};

    \node (D) [below = of W, decision] {$\hat{Y}$};
    \node (Y) [right = of D] {$Y$};
    \node (U) [below = of Y, utility] {$U$};

    \edge {D,Y}{U};
    \edge [dashed] {T}{A,W};
    \edge [dashed] {W}{D};
    \edge [dashed] {S} {W,Y};
    \begin{scope}[
        every node/.style = {draw=none, rectangle, node distance=1mm, color=blue}]
        \footnotesize
        \node [below = of U] {zero-one loss};
        \node [left = of A] {$A=T$};
        \node [above = of T] {Bern$(0.9)$};
        \node [above = of W, yshift=2mm] {$W=T\cdot S^m$};
        \node [left = of D] {$\pred = W$};
        \node [above right = of Y] {$Y=S^m$};
        \node [above = of S] {$\begin{aligned}
            P(S^m=-1) &= 0.4 \\ P(S^m=1) &= 0.6
        \end{aligned}$};
    \end{scope}
    \end{influence-diagram}
    \caption{Proof of \cref{theorem:itv_gc}, Case 2a illustration}
    \label{fig:gc_itv_case2a}
\end{figure}

Since each feature $O^i$ only reveals the product of two of the $S^i$, $\syhat$ can only infer $Y$ by multiplying all the features $O^i$ and $W$ together.
Thus all optimal policies must take $f_{\syhat} = W\cdot\prod_i O^i$ (which yields $\syhat = Y$), whenever $W \in \{-1,1\}$. When $W=0$ , then the features are completely uninformative, so to optimise zero-one loss every optimal policy must take $f_{\preds}(W=0, \cdot) = 1,$ (the most likely value of $Y$).

Since $Y$ is independent of $A$, we have $\mathbb{E}(Y \mid A=0) = \mathbb{E}(Y \mid A=1)=0.2,$ and so $\ATV(Y)=0.$
The only optimal policy satisfies $\mathbb{E}(\preds \mid A=0)=1, \mathbb{E}[\preds \mid A=1]= \mathbb{E}(\target \mid A=1)=0.2$, and so $|ATV(\preds)|=0.8.$
Thus $\ITV(\preds)=0.8>0.$\\

\noindent \textit{Case 2b: Paths $p_{AW}$ and $p_{W\preds}$ intersect at more than $W$.}\\
We split this into two sub-cases:\\

\noindent \textit{Case 2b.i:}\\
If $A$ is not on $p_{W\preds},$ then for the nodes $V \ne W$ on $p_{AW}$ that do intersect, let $V=(V_1,V_2),$ where $V_1$ plays the role as per $p_{AW}$ in case 1, and $V_2$ plays the role as per $p_{W Y}$ in case 1. Then we may proceed as in case 1.\\

\noindent \textit{Case 2b.ii:}\\
Conversely, suppose that $A$ is on $p_{W Y}$. If $A$ occurs between $W$ and some collider $C^i$ on $p_{W Y},$ then repeat case 1 with feature $O^i$ (the observed descendent of $C^i$) playing the part of $W$.

Else if $A$ occurs after $C^m,$ then the graph must contain the structure given in Figure \ref{fig:gc_itv_case2ii}.

\begin{figure}[ht!]
    \centering
    \begin{influence-diagram}
    
    \setrectangularnodes
    
    \node (help) [phantom] {};
    \node (A) [above = of help] {$A$};
    \node (O) [left = of help] {$O^m$};
    \node (S) [right = of help] {$S^m$};

    \node (Y) [below = of S] {$Y$};
    \node (D) [below = of O, decision] {$\hat{Y}$};
    \node (U) [below = 2 of help, utility] {$U$};

    \edge [dashed] {D,Y}{U};
    \edge [dashed] {O}{D};
    \edge [dashed] {S}{A,Y};
    \edge [dashed] {A}{O};
    \begin{scope}[
        every node/.style = {draw=none, rectangle, node distance=1mm, color=blue}]
        \footnotesize
        \node [below = of U] {zero-one loss};
        \node [left = of D] {$f_{\hat{Y}}(O^m) = O^m$};
        \node [right = of Y] {$Y=\begin{cases}
        0 & \text{with } p = \tfrac{1}{4}\\
        1 & \text{with } p = \tfrac{1}{4}\\
        A & \text{with } p = \tfrac{1}{2}\\        
        \end{cases}$};
        \node [right = of S] {$S^m\sim \text{Bern}(0.5)$};
        \node [left = of O] {$O^m=A$};
        \node [right = of A] {$A = S^m$};
    \end{scope}
    \end{influence-diagram}
    \caption{Completeness proof of \cref{theorem:itv_gc}, Case 2b.ii}
    \label{fig:gc_itv_case2ii}
\end{figure}

In which case, let
\begin{itemize}
    \item $S^m \sim \textrm{Bern}(0.5)$.
    \item $S^m \dashrightarrow A \dashrightarrow O^m$ be a copying path, so that $P(S^m =A)=1.$
    \item $P(Y=0) = P(Y=1)= 0.25,$ $P(Y=A) = 0.5.$ That is, with probability 0.5, $Y$ copies the value of $A$, otherwise $Y$ takes value 0 or 1 equiprobably, independently of $A$.
    \item $U$ uses zero-one loss.
\end{itemize}
Then the optimal policy will be $f_{\preds}(O^m) = O^m = A,$ and we have $|\ATV(\preds)|=1,$ $|\ATV(Y)|=0.75,$ and so $\ITV = 0.25>0.$\\

\noindent \textit{Case 3:}\\
Finally, we return to the case $T=W$. Since the path $p_{W Y}$ contains a collider, then let $O^1$ play the part of $W$ and proceed as above.
\end{proof}

\section{Proof for PSIE Incentives}
\label{sec:app_psie_proofs}

\theorempsiegc*

\begin{proof}[Proof of completeness]
Suppose $\mathcal{G}$ satisfies the graphical criteria.
Then $\mathcal{G}$ also satisfies the Theorem \ref{theorem:itv_gc} graphical criteria (for ITV incentives), with the added constraint that the d-connected path $p$ (given by (i)) is directed, and in $\mathcal{P}$. Let the d-connected path between $W$ and $Y$ guaranteed by condition (ii) be denoted $p'$.

One of the constructions in the Theorem \ref{theorem:itv_gc} completeness proof must be compatible with $\cid$, with $p'$ playing the role of $p_{W Y}$ and $p$ playing the role of $p_{AW}$. Since these constructions have only one (non-zero) path between $A$ and $\target$ (which must be $p'$ by assumption), it follows that either $\PSE_\mathcal{P}(Y) = 0$ (if $p' \notin \mathcal{P}$), or $\PSE_\mathcal{P}(Y) = \ATV(Y).$  
Since these constructions have at most one path between $A$ and $\preds,$ (which must be $p$ by assumption), it follows that intervening via $\mathcal{P}$ and conditioning on $A$ will have the same effect on $\preds$. Thus $\PSE_\mathcal{P} = \ATV(\preds).$

Thus $\PSIE_{\mathcal{P}}\ge \ITV > 0.$
\end{proof}

\end{document}